\def\eqref#1{equation~\ref{#1}}
\def\1{\bm{1}}
\DeclareMathAlphabet{\mathsfit}{\encodingdefault}{\sfdefault}{m}{sl}
\SetMathAlphabet{\mathsfit}{bold}{\encodingdefault}{\sfdefault}{bx}{n}
\title{FieldFormer: Locality-Aware Transformers for\\ Spatio-Temporal Modeling on Sparse Sensor Networks}
\author{\name Ankit Bhardwaj \email bhardwaj.ankit@nyu.edu \\
      \addr Department of Computer Science\\
      New York University
      \AND
      \name Ananth Balashankar \email ananth@google.com \\
      \addr Google DeepMind
      \AND
      \name Lakshmi Subramanian \email lakshmi@nyu.edu\\
      \addr Department of Computer Science\\
      New York University}
\theoremstyle{plain} % bold heading, italic body
\newtheorem{theorem}{Theorem}[section]
\theoremstyle{definition} % bold heading, upright body
\newtheorem{definition}[theorem]{Definition}
\newtheorem{assumption}[theorem]{Assumption}
\theoremstyle{remark} % italic heading, upright body
\newtheorem{remark}[theorem]{Remark}
\begin{document}

\maketitle

\begin{abstract}
Spatio-temporal sensor data in real-world systems is often sparse, noisy, and irregular, making it difficult to infer global structure from limited observations. Under extreme sparsity, we run into the limits of identifiability of latent system states, making latent field reconstruction fundamentally underconstrained. In such scenarios, multiple physically plausible fields may remain consistent with the same observations, requiring reconstruction models to rely heavily on inductive biases regarding locality, transport structure, and spatial regularity. 

Under such sparsity regimes, reliable reconstruction becomes concentrated around the observational support induced by the sensor network, making sensor-space modeling a more identifiable objective than unconstrained global field recovery. We introduce FieldFormer, a mesh-free transformer architecture designed for locality-aware sensor-space modeling in persistent sensor networks. For each query, FieldFormer aggregates local evidence using a learnable velocity-scaled distance metric that adapts neighborhood geometry to heterogeneous spatio-temporal relationships. Neighborhoods are constructed as fixed maximal sparse contexts over nearby sensors and bounded temporal windows, while learned velocity-scaled offsets modulate token geometry within this context, enabling stable and scalable inference under extreme sparsity. A local transformer encoder integrates neighborhood information, while global consistency is modeled through coordinate-based neural field formulation.

We evaluate FieldFormer across five benchmarks spanning synthetic and real-world spatio-temporal systems, including anisotropic heat diffusion, shallow-water dynamics, atmospheric transport fields, and pollution monitoring datasets. Our results reveal that locality-aware reconstruction provides strong advantages in persistent sparse sensor networks where local domains of dependence remain observed, enabling FieldFormer to consistently outperform state-of-the-art baselines on sensor-space prediction tasks under highly sparse and noisy sensing regimes. \\

\noindent\textbf{Code Repository}
Our code repository is available at \url{https://github.com/ankitbha/fieldformer}. The README file contains detailed instructions regarding the code base.
\end{abstract}

\section{Introduction}

Many real-world systems in environmental monitoring, atmospheric science, urban infrastructure, and geophysical sensing evolve over space and time according to underlying physical processes, often described through partial differential equations (PDEs). A general form of such systems is
\begin{align}
\partial_t \mathbf{u}(\mathbf{x}, t)
&=
\mathcal{F}\big(
\mathbf{u}(\mathbf{x}, t),
\nabla \mathbf{u}(\mathbf{x}, t),
\Delta \mathbf{u}(\mathbf{x}, t),
\ldots;
\boldsymbol{\theta}
\big),
\end{align}
where $\mathbf{u}(\mathbf{x}, t) \in \mathbb{R}^C$ denotes the evolving latent state and $\boldsymbol{\theta}$ represents unknown or partially known physical parameters such as diffusivity, transport velocity, or source terms. In practice, however, these systems are rarely fully observed. Instead, measurements are obtained from a sparse set of sensors distributed over space, producing noisy and incomplete observations of the underlying process.

This mismatch between continuous physical dynamics and sparse sensing creates a fundamental challenge for spatio-temporal field reconstruction. Under realistic sensing constraints, large portions of the spatial domain remain unobserved, while sensors provide dense longitudinal traces only at a small number of fixed locations. As a result, reconstructing a globally consistent latent field from sparse measurements becomes fundamentally underconstrained. Multiple physically plausible latent fields may remain consistent with the same observations, particularly under extreme spatial sparsity, requiring reconstruction models to rely heavily on inductive biases regarding locality, transport structure, and spatial regularity.

To formalize this setting, we consider the discrete-time dynamical system induced by numerical integration of the PDE:
\[
\mathbf{u}_{t+1} = \Phi_\theta(\mathbf{u}_t),
\]
with sparse observations given by a measurement operator
\[
\mathbf{y}_t = \mathcal{O}(\mathbf{u}_t),
\]
where $\mathbf{y}_t$ corresponds to sensor measurements collected at a finite set of spatial locations. This induces a fundamental observability mismatch: the latent process evolves in a high-dimensional continuous state space, while observations lie in a much lower-dimensional sensor space determined by the sensing topology. Consequently, the mapping from latent states and physical parameters to observations becomes many-to-one under sparse sensing.

Recent results on identifiability under sparse observations \citep{norden2025importancestructuralidentifiabilitymachine,wieland2021structural} further imply that distinct physical parameterizations may produce observationally equivalent sensor trajectories over finite horizons. That is, there may exist
\[
\theta_1 \neq \theta_2
\]
such that
\[
\mathcal{O}(\Phi_{\theta_1}^t(\mathbf{u}_0))
=
\mathcal{O}(\Phi_{\theta_2}^t(\mathbf{u}_0)),
\]
despite inducing globally different latent fields. As a consequence, sparse sensing fundamentally limits the identifiability of the latent spatio-temporal field itself: multiple globally distinct reconstructions may remain fully consistent with the same sensor observations. Under such regimes, globally accurate reconstruction away from observed support cannot be guaranteed without introducing additional structural assumptions or inductive priors regarding the underlying data-generating process.

In this work, we argue that under extreme sparse sensing, reliable reconstruction becomes concentrated around the observational support induced by the sensor network, making \emph{sensor-space modeling} a more identifiable objective than unconstrained global field recovery. Sensor-space modeling refers to reconstruction objectives concentrated around the persistent sensing topology induced by deployed sensors. Rather than interpreting reconstruction as exact recovery of the latent physical state, we instead view it as the construction of a physically plausible surrogate field that remains consistent with sparse observations and local transport structure.

Motivated by this perspective, we introduce \textbf{FieldFormer}, a mesh-free transformer architecture designed for locality-aware sensor-space modeling in persistent sparse sensor networks. FieldFormer is built around the observation that many physical processes exhibit localized domains of dependence, where the evolution at a query location is primarily influenced by nearby spatial-temporal interactions. Instead of relying on global attention or fixed graph connectivity, FieldFormer performs reconstruction through adaptive local neighborhoods that dynamically organize sparse observations according to learned spatio-temporal geometry.

For each query location, FieldFormer aggregates local evidence using a learnable velocity-scaled distance metric that adapts neighborhood geometry to heterogeneous spatio-temporal relationships. Neighborhoods are constructed as fixed maximal sparse contexts over nearby sensors and bounded temporal windows, while learned velocity-scaled offsets modulate token geometry within this context, enabling stable and scalable inference under extreme sparsity. A local transformer encoder models relational structure over irregularly sampled observations, while global consistency is represented through a coordinate-based neural field formulation. When partial physical knowledge is available, additional structural consistency can be incorporated through differentiable PDE-based regularization.

Our formulation differs fundamentally from prior approaches to spatio-temporal reconstruction. Classical interpolation methods such as kriging and Gaussian processes rely on restrictive smoothness assumptions and scale poorly under large irregular datasets. Graph-based methods and GNN-PINN hybrids require fixed connectivity structures that do not adapt to evolving spatial dependencies, and can be replicated by transformer based approaches \citep{joshi2025transformers}. On the other hand, transformer-based imputation models and coordinate-based neural field methods often rely on globally regularized latent representations. Such approaches generalize to unseen sensors only when their implicit priors align with the underlying data-generating process, while failing to efficiently exploit localized domains of dependence under persistent sparse sensing. In contrast, FieldFormer explicitly aligns its inductive bias with the localized observability structure induced by sparse sensor networks.

We evaluate FieldFormer across five benchmarks spanning both synthetic and real-world spatio-temporal systems, including anisotropic heat diffusion, shallow-water dynamics, pollution dispersion simulation, atmospheric transport fields, and pollution monitoring datasets. Our experiments show that FieldFormer achieves strong performance for sparse sensor network imputation, where the goal is to reconstruct or forecast measurements over persistent sensing topologies from incomplete and noisy observations. At the same time, we observe that globally unconstrained field reconstruction under extreme sparsity exhibits highly variable behavior across datasets and reconstruction methods, with no single modeling paradigm consistently outperforming others. Different architectures succeed or fail depending on how their implicit inductive priors align with the underlying data-generating process, highlighting the fundamentally underconstrained nature of global latent field recovery under sparse sensing.

These findings suggest that sparse physical field reconstruction should be interpreted not as exact latent state recovery, but as the construction of observationally grounded surrogate fields under severe observability limitations. While extreme sparsity fundamentally limits reliable global reconstruction, accurate sensor-space imputation remains achievable and practically useful across a broad range of applications, including environmental monitoring, atmospheric sensing, pollution tracking, urban infrastructure management, industrial IoT systems, climate observation networks, and scientific sensor deployments where measurements are spatially sparse but temporally dense.

Overall, our work contributes both a practical framework for sparse spatio-temporal reconstruction and a conceptual perspective on the limits of inference under sparse sensing. By connecting locality-aware modeling with observability structure, we argue that reconstruction quality in physical AI systems is fundamentally shaped not only by model architecture, but also by the information geometry induced by the sensing process itself.
\section{Related Work}

\noindent\textbf{Spatio-Temporal Imputation under Sparse Observability.}
Classical methods such as Kriging \citep{cressie1990origins} and statistical models with spatio-temporal kernels \citep{wikle2019spatio,de2005predictive} provide principled approaches for interpolation from sparse measurements, but they typically rely on stationarity, smoothness, and kernel assumptions that may not hold across heterogeneous physical systems. They also scale poorly, with cubic complexity in the number of observations, and do not explicitly account for governing dynamics during interpolation. Physics-based numerical solvers such as finite difference and finite element methods \citep{leveque2007finite,hughes2003finite} provide mechanistic structure, but require sufficiently specified physical models, boundary conditions, and parameters, which are often unavailable or non-identifiable from sparse sensor data alone.

Learning-based approaches have been developed to model more complex spatio-temporal dependencies. Message-passing recurrent neural networks (MPRNNs) \citep{iyer2022modeling} use learned dynamics over sensor networks; graph-based approaches such as Graph WaveNet \citep{wu2019graph} and ST-Transformer \citep{xu2020spatial} encode spatial relations through fixed or learned adjacency structures; and transformer models such as ImputeFormer \citep{nie2024imputeformer} capture long-range temporal dependencies in structured multivariate time series. These methods are effective when the sensing topology and data-generating process align with their architectural priors, but they generally treat sparse sensing as a missing-data problem rather than as an observability-constrained physical inference problem. In contrast, our work explicitly studies reconstruction under severe sparse sensing, where the goal is not unconstrained global field recovery, but reliable sensor-space modeling and surrogate field construction under limited observational support.

\noindent\textbf{Mesh-Free vs.\ Mesh-Dependent Spatio-Temporal Modeling.}
Prior work on spatio-temporal reconstruction can be broadly divided into mesh-free and mesh-dependent approaches. Mesh-free methods~\citep{tancik2020fourier,sitzmann2020implicit,raissi2019physics,cressie1990origins,hensman2013gaussian} operate directly on continuous coordinates, support arbitrary query locations, and naturally accommodate irregular sensor layouts and multi-resolution inference. These models are attractive for sparse sensing because they are not tied to a fixed grid or sensor graph. However, under extreme sparsity, continuous query capability does not by itself guarantee identifiable global reconstruction: predictions away from observational support are necessarily governed by the model's implicit priors.

Mesh-dependent methods~\citep{nie2024imputeformer,iyer2022modeling,xu2020spatial,wu2019graph,liu2023physics,liang2024higher,santos2023development,zhao2024recfno,li2020fourier,rahman2022u,li2017diffusion} assume a fixed spatial discretization, such as dense grids or static sensor graphs, and typically treat missing values as masked entries rather than absent spatial observations. Such methods can learn strong global or low-rank priors over fixed domains, and may generalize well when these priors match the underlying data-generating process. However, they are less naturally suited to irregular, persistent sparse sensor networks where reliable prediction is concentrated around the deployed sensing topology. FieldFormer follows the mesh-free paradigm, but uses it for locality-aware sensor-space modeling rather than claiming unconstrained global recovery under sparse observability.

\noindent\textbf{Physics-Informed Neural Networks and Differentiable PDE Solvers.}
Physics-Informed Neural Networks (PINNs) \citep{raissi2019physics} embed PDE constraints into neural network training by computing differential operators on model outputs via automatic differentiation and penalizing residual violations. This enables both forward and inverse modeling when the governing equations and sufficient observations are available. However, PINNs typically rely on global coordinate function approximators and dense collocation over structured domains. As noted by \citep{krishnapriyan2021characterizing}, vanilla global PINNs often struggle in moderately complex PDE regimes and real-world scenarios. In sparse longitudinal sensing settings, where observations are dense in time but sparse in space, global PDE residual minimization may be weakly constrained or even misleading when parameters, forcing terms, and boundary conditions are unknown.

Other physics-integrated approaches, such as DiffTaichi \citep{hu2019difftaichi} and JAX-FDM \citep{kochkov2021machine}, enable differentiable programming over simulation pipelines, but still rely on mesh-based discretizations and sufficiently specified simulators. These assumptions limit their applicability when the available data consist of scattered sensor measurements and the underlying physical process is only partially known. FieldFormer instead incorporates physical structure as optional local regularization while prioritizing observational consistency over persistent sparse sensor networks.

\noindent\textbf{Transformers for Scientific Modeling and Field Inference.}
Transformer architectures have increasingly been applied to scientific modeling tasks, including spatio-temporal prediction and PDE-informed learning. TransFlowNet \citep{wang2022transflownet} introduces physics-constrained loss terms into transformer models for spatio-temporal flow prediction, while \citep{lorsung2024physics} integrate attention-based sequence modeling with physics-informed PDE tokens. These approaches typically assume dense spatial coverage, complete or well-specified governing physics, or grid-aligned discretizations, and often rely on global attention mechanisms. Such assumptions are difficult to satisfy in sparse sensing regimes with unobserved forcing, uncertain parameters, and incomplete boundary information.

In contrast, FieldFormer restricts attention to adaptive local neighborhoods while remaining globally grounded through continuous coordinates. This design reflects the view that, under sparse observability, reliable reconstruction depends on whether the sensor network sufficiently covers local domains of dependence. Rather than using transformers to model an unconstrained global latent field, FieldFormer uses local attention to organize sparse observations according to learned spatio-temporal geometry.

\noindent\textbf{Hybrid Approaches for Learning with Physical Structure.}
Recent work has explored hybrid approaches that combine learning with physical structure without explicitly solving governing PDEs. Neural Operators such as Fourier Neural Operators (FNOs) \citep{li2020fourier} and Graph Neural Operators \citep{li2020neural} learn mappings between function spaces, but typically require dense grid-based inputs and outputs. This makes them less suitable for scattered sensor measurements or persistent sparse sensing regimes where large portions of the domain are never observed. Other hybrids, including Physics-Informed Neural Fields \citep{chu2022physics} and Koopman operator methods \citep{lusch2018deep}, embed physical constraints or dynamical priors into latent representations, but generally assume dense trajectories, simulation-generated data, or sufficient observational coverage.

These approaches demonstrate the value of combining learning with physical structure, but they do not directly address the observability limits induced by sparse sensor networks. FieldFormer complements this line of work by focusing on locality-aware reconstruction under sparse observational support, where global field recovery is underconstrained and prediction quality depends critically on the alignment between model inductive bias and sensing topology.
\section{Problem Formulation}

\textbf{Global Field Recovery.}
The most general reconstruction objective is to estimate the latent field over a global spatio-temporal query domain
\[
\mathcal{Q} \subset \Omega \times [0,T],
\]
where $\Omega$ denotes the full spatial domain. In this setting, a model seeks to learn a function
\[
\hat{\mathbf{u}}:
\mathcal{Q} \rightarrow \mathbb{R}^q
\]
such that
\[
\hat{\mathbf{u}}(\mathbf{x},t) \approx \mathbf{u}(\mathbf{x},t),
\qquad
(\mathbf{x},t)\in\mathcal{Q}.
\]
The corresponding global reconstruction objective can be written as
\[
\min_{\phi}
\sum_{(\mathbf{x},t)\in\mathcal{Q}}
\left\|
f_{\phi}(\mathbf{x},t;\mathcal{D}_{\mathcal{S}})
-
\mathbf{u}(\mathbf{x},t)
\right\|_2^2
+
\lambda \mathcal{R}_{\mathrm{phys}}(f_{\phi}),
\]
where $f_{\phi}$ is the learned field estimator and $\mathcal{R}_{\mathrm{phys}}$ denotes optional physical regularization. This formulation is natural when dense ground-truth fields are available, as in simulations, and when the sensing process provides sufficient spatial coverage to constrain the latent state.

However, in real sparse sensor networks, $\mathbf{u}(\mathbf{x},t)$ is generally unobserved for most $\mathbf{x}\in\Omega$. The only directly observed quantities are the projections of the latent field onto the deployed sensor locations. Consequently, multiple globally distinct fields may induce identical or nearly identical sensor trajectories, making the global objective underdetermined without additional assumptions on smoothness, locality, transport, or the data-generating process. Thus, global field recovery should be interpreted as prior-guided surrogate reconstruction rather than identifiable recovery of the true latent field under extreme sparse sensing. \\

\textbf{Sensor-Space Modeling.} Under extreme spatial sparsity, estimating the latent field over an arbitrary query set
$\mathcal{Q} \subset \Omega \times [0,T]$ may be fundamentally underconstrained. We therefore distinguish global field reconstruction from a more identifiable objective: \emph{sensor-space modeling}. Let the deployed sensor network be denoted by
\[
\mathcal{S} = \{\mathbf{s}_j\}_{j=1}^{M}, \qquad \mathbf{s}_j \in \Omega ,
\]
where $M$ is small relative to the spatial resolution of the underlying domain. The sensor-space observation operator is
\[
\mathcal{O}_{\mathcal{S}}(\mathbf{u})(t)
=
\big[
\mathbf{u}(\mathbf{s}_1,t),
\mathbf{u}(\mathbf{s}_2,t),
\dots,
\mathbf{u}(\mathbf{s}_M,t)
\big]
\in \mathbb{R}^{M \times q}.
\]
The observed sensor trajectory is therefore
\[
\mathbf{y}_t
=
\mathcal{O}_{\mathcal{S}}(\mathbf{u})(t)
+
\boldsymbol{\epsilon}_t .
\]

In the longitudinal sensing regime, observations are dense in time but restricted to the spatial support of the sensor network:
\[
\mathcal{D}_{\mathcal{S}}
=
\left\{
(\mathbf{s}_j,t_k,\mathbf{y}_{j,k})
:
j=1,\dots,M,\;
k=1,\dots,T_j
\right\}.
\]
Rather than estimating $\mathbf{u}(\mathbf{x},t)$ for arbitrary $\mathbf{x}\in\Omega$, sensor-space modeling aims to learn a predictor
\[
\hat{\mathbf{u}}_{\mathcal{S}}:
\mathcal{S}\times[0,T]\rightarrow \mathbb{R}^q
\]
that reconstructs or forecasts the physical state only on the observational support induced by the deployed sensors:
\[
\hat{\mathbf{y}}_{j,k}
=
\hat{\mathbf{u}}_{\mathcal{S}}(\mathbf{s}_j,t_k).
\]

The corresponding learning objective is
\[
\min_{\phi}
\sum_{(\mathbf{s}_j,t_k,\mathbf{y}_{j,k})\in\mathcal{D}_{\mathcal{S}}}
\left\|
f_{\phi}(\mathbf{s}_j,t_k;\mathcal{D}_{\mathcal{S}})
-
\mathbf{y}_{j,k}
\right\|_2^2
+
\lambda \mathcal{R}_{\mathrm{phys}}(f_{\phi}),
\]
where $f_{\phi}$ is the learned reconstruction model and $\mathcal{R}_{\mathrm{phys}}$ denotes optional physical regularization, such as PDE residuals or boundary constraints when such information is available.

Under severe sparse sensing, the most identifiable prediction target is not the unconstrained global field
\[
\mathbf{u}:\Omega\times[0,T]\rightarrow\mathbb{R}^q,
\]
but its restriction to the sensor support:
\[
\mathbf{u}_{\mathcal{S}}
=
\mathbf{u}\big|_{\mathcal{S}\times[0,T]}.
\]
Thus, sensor-space modeling evaluates whether a method can produce accurate, observationally grounded predictions over a persistent sensor topology, rather than claiming exact recovery of the full latent field away from observed support.
\section{FieldFormer}
\label{sec:method}
\subsection{Design Rationale}
FieldFormer is designed around three core principles motivated by sparse, longitudinal spatio-temporal data:

\begin{itemize}[leftmargin=*,topsep=0pt]
    \item \textbf{Local--Global Information Mixing:}
    Prior work by \cite{bhardwaj2025comprehensive}has shown that purely local methods (e.g., Kriging) are effective under sparse sensing, while global neural fields can capture long-range dependencies but struggle in longitudinal regimes. FieldFormer combines local neighborhood encoding with global coordinate information using transformer-based attention, enabling relational reasoning across space and time while retaining strong local inductive bias. Although attention is restricted to local neighborhoods, predictions remain globally grounded through continuous coordinates.

    \item \textbf{Learned Neighborhood Geometry:}
    Global attention incurs $O(n^2)$ memory cost, which is prohibitive for temporally dense sensing. FieldFormer instead performs attention over compact local neighborhoods, whose geometry is learned from data and physics priors. This allows the model to attend selectively to physically relevant regions while remaining memory-efficient and scalable.

    \item \textbf{Mesh-Free Modeling:}
    FieldFormer adopts a mesh-free coordinate-based representation along with a local spatio-temporal context, enabling inference at arbitrary spatial and temporal resolutions. This makes the model naturally adaptable to evolving sensor networks. For example, when a new sensor is deployed, its measurements can be incorporated as additional coordinate-value observations without redefining the spatial grid, rebuilding a graph adjacency matrix, or changing the model architecture. Similarly, if sensors fail, move, or are added in previously unobserved regions, the model can continue operating by constructing neighborhoods from the updated set of available observations.
\end{itemize}

\subsection{Architecture}
\begin{figure*}
    \centering
    \includegraphics[width=0.95\linewidth]{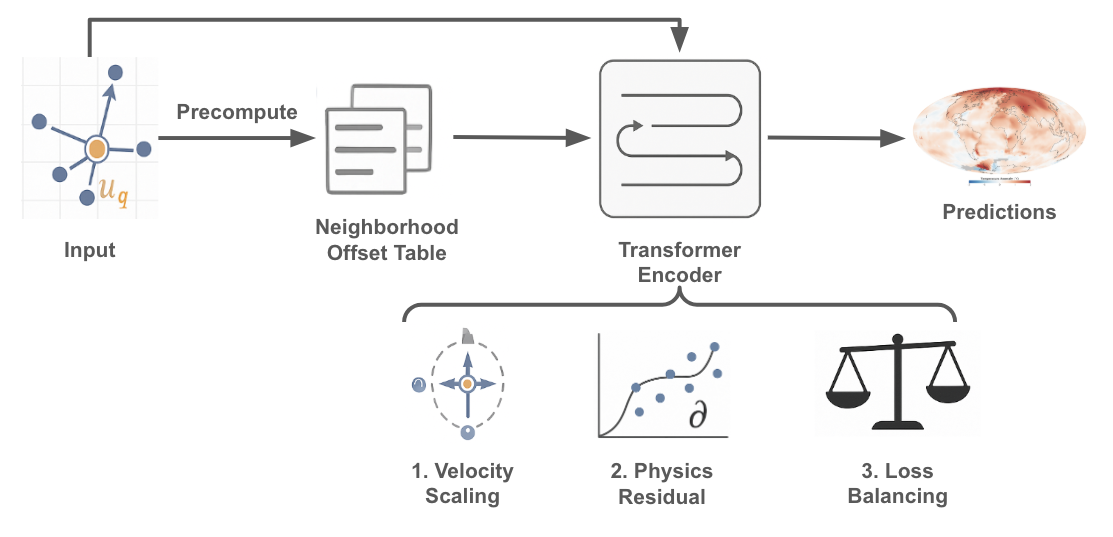}
    \caption{\small Conceptual overview of FieldFormer. For each query location, a physics-aware local neighborhood is constructed via learned velocity scaling, encoded with relative offsets and observations, and aggregated using a transformer encoder to produce mesh-free spatio-temporal predictions, trained with data and optionally balanced physics losses.}
    \label{fig:concept}
\end{figure*}

\noindent\textbf{Local Neighborhood Encoding:}  
Predicting the field value at a query $(\mathbf{x}_q, t_q) \in \mathbb{R}^{d_s} \times \mathbb{R}$ requires local context: PDEs, especially with \textit{local operators}, typically couple a state variable to its spatial and temporal neighbors rather than to the entire domain. To emulate this, FieldFormer extracts a fixed-size local neighborhood from the dataset
\[
\mathcal{D} = \{ (\mathbf{x}_i, t_i, \mathbf{u}_i) \}_{i=1}^N, \quad \mathbf{u}_i \in \mathbb{R}^q.
\]

Neighbors are not chosen by naive Euclidean distance, but by a \emph{velocity-scaled metric} that adapts to anisotropic dynamics:
\begin{equation}
\label{eq:velocity_distance}
d\big( (\mathbf{x}_q, t_q), (\mathbf{x}_i, t_i) \big) 
= \sum_{k=1}^{d_s} \gamma_k^2 (x_{q,k} - x_{i,k})^2 + \gamma_t^2 (t_q - t_i)^2.
\end{equation}
The learnable parameters $\{\gamma_k\}_{k=1}^{d_s}, \gamma_t$ weight each spatial axis and time relative to one another. They are parameterized as $\gamma = \exp(\theta)$ to enforce positivity. This scaling allows the model to \emph{discover} the anisotropy of the underlying process: for purely diffusive systems it may amplify temporal proximity, while for advective or wave-like systems it stretches neighborhoods preferentially along flow directions. Thus the model automatically learns whether a time step is "worth" more than a spatial offset.

Each neighbor $(\mathbf{x}_j, t_j, \mathbf{u}_j)$ is encoded relative to the query:
\[
\mathbf{v}_j = \big[\, \mathbf{x}_j - \mathbf{x}_q,\;\; t_j - t_q,\;\; \mathbf{u}_j \,\big] \in \mathbb{R}^{d_s+1+q},
\]
resulting in an input matrix $\mathbf{X}_q \in \mathbb{R}^{n \times (d_s+1+q)}$. This encoding is translation-invariant and emphasizes relative differences, which is crucial when queries are at unseen coordinates.

\noindent\textbf{Sparse Neighborhood Construction.}
A practical challenge in sparse longitudinal sensing is that the model must construct a local spatio-temporal context from incomplete observations without relying on a dense grid or repeatedly performing query-wise nearest-neighbor search. FieldFormer handles this by operating directly on observed sensor-time tuples. Each available observation is represented as $(\mathbf{x}_s,t_k,\mathbf{u}_{s,k})$, indexed by sensor location $s$ and time index $k$. For a query $(\mathbf{x}_q,t_q)$, FieldFormer constructs a fixed maximal longitudinal context
\[
\mathcal{C}(q)
=
\mathcal{S}_K(q)
\times
\{t_q-K_t,\ldots,t_q+K_t\},
\]
where $\mathcal{S}_K(q)$ denotes a small spatial candidate set of nearby sensor locations and $K_t$ is the temporal window radius. Missing observations are excluded from this context, and the remaining valid observations are padded or truncated to produce a fixed-size set of $K$ tokens for transformer aggregation.

This fixed-context construction is a differentiable and efficient relaxation of learned metric-based neighbor selection. Given a query $q$, the learned spatio-temporal metric can be written as
\[
d_q(i)
=
\gamma_x(q)^2 \Delta x_i^2
+
\gamma_y(q)^2 \Delta y_i^2
+
\gamma_t(q)^2 \Delta t_i^2 ,
\]
where $\Delta x_i=x_i-x_q$, $\Delta y_i=y_i-y_q$, and $\Delta t_i=t_i-t_q$. Once the relevant spatial candidates are included in $\mathcal{S}_K(q)$, the temporal component of the metric is monotone in $|\Delta t_i|$. Therefore, for a sufficiently large temporal window, the fixed context contains the metric-$k$NN neighborhood as a subset:
\[
\mathrm{kNN}_{d_q}(q) \subseteq \mathcal{C}(q).
\]
Rather than selecting a hard metric-dependent subset whose membership changes discontinuously as $\gamma$ evolves, FieldFormer includes this local superset and lets the model learn relevance through metric-conditioned token features.

Concretely, for each candidate observation, FieldFormer computes relative offsets and scales them by positive learned factors,
\[
(\widetilde{\Delta x}_i,\widetilde{\Delta y}_i,\widetilde{\Delta t}_i)
=
(\gamma_x \Delta x_i,\gamma_y \Delta y_i,\gamma_t \Delta t_i),
\qquad
\gamma=\exp(\theta),
\]
with periodic wrapping for spatial coordinates when appropriate. These scaled offsets are concatenated with the observed value to form each input token. Thus, the learned geometry does not determine hard inclusion in the neighborhood; instead, it controls how the transformer interprets spatial and temporal separation within a fixed candidate context. This avoids query-wise kNN, keeps memory and computation constant per query, and provides stable gradients for the geometry parameters.

This design also reflects the information limits of sparse sensing. If the spatial candidate set fails to cover relevant spatial directions, increasing the temporal window cannot fully compensate for the missing spatial stencil information. For local PDEs, temporal samples at the same sparse locations provide useful longitudinal evidence, but they are not a substitute for unobserved spatial dependencies. FieldFormer therefore treats the fixed context as the maximum available local evidence and uses learned geometry and attention to weight that evidence, rather than assuming that longer time histories can resolve missing spatial coverage.

\noindent\textbf{Transformer-Based Local Inference:}  
The neighborhood matrix $\mathbf{X}_m$ is encoded with a transformer encoder with $L$ layers:
\[
\mathbf{H}_q = \text{TransformerEncoder}(\mathbf{X}_m) \in \mathbb{R}^{m \times d'},
\]
where each row corresponds to a neighbor treated as a token. Because relative deltas already encode spatial and temporal positions, no global positional encoding is required. Mean pooling aggregates neighbor embeddings,
\[
\mathbf{h}_q = \frac{1}{m} \sum_{j=1}^m \mathbf{H}_q^{(j)},
\]
and a prediction head maps to the output field value:
\[
\hat{\mathbf{u}}_q = \text{MLP}(\mathbf{h}_q).
\]

This architecture is deliberately local: unlike global attention, its cost is constant per query, and its design aligns with the locality of PDE operators. It is also anisotropy-aware: the same transformer backbone can adapt to diffusion, advection, or wave propagation simply by learning different $\gamma$ scales.

\subsection{Inference and Efficiency}
\label{sec:inference_and_efficiency}
\noindent\textbf{Inference at Arbitrary Locations:}  
Prediction at $(\mathbf{x},t)$ requires only three steps:  
1) Use the pre-computed offset table to gather neighbors under the current scales $\gamma$.  
2) Encode deltas and neighbor values into $\mathbf{X}_m$.  
3) Forward through the transformer and MLP. Because neighborhoods are constant-sized, complexity per query is $\mathcal{O}(m^2 d)$, independent of total dataset size.

\noindent\textbf{Generalization and Resolution Adaptivity:}  
FieldFormer generalizes across resolutions: since inference is purely coordinate-based and local, the model trained at one resolution can be evaluated at another. This enables both upsampling of sparse sensor data into high-resolution fields and coarsening of predictions for downstream models.

\noindent\textbf{Scalability:}  
The offset-based neighbor search reduces lookup cost from $\mathcal{O}(N \log N)$ for kNN\citep{bentley1975multidimensional} to amortized $\mathcal{O}(1)$. Combined with fixed-size local attention, this yields linear scaling in batch size and constant scaling with grid resolution, in stark contrast to $\mathcal{O}(N^2)$ global transformers.

\subsection{FieldFormer Variants}
\label{sec:ff_variants}
\paragraph{FieldFormer-PINN.}
\noindent\textbf{Autograd Residuals:}  
We treat FieldFormer itself as a \emph{coordinate neural field}: given input $(\mathbf{x},t)$, its prediction is differentiable w.r.t. the coordinates. Thus derivatives are obtained directly:
\[
\nabla u(\mathbf{z}_q) = \frac{\partial u}{\partial \mathbf{z}}(\mathbf{z}_q), 
\quad 
\nabla^2 u(\mathbf{z}_q) = \frac{\partial^2 u}{\partial \mathbf{z}^2}(\mathbf{z}_q).
\]
For governing PDE
\[
\mathcal{F}(\mathbf{x},t,u,\nabla u, \nabla^2 u, \dots) = 0,
\]
the residual at a query is
\[
R(\mathbf{z}_q) = \mathcal{F}\big(\mathbf{z}_q, u(\mathbf{z}_q), \nabla u(\mathbf{z}_q), \nabla^2 u(\mathbf{z}_q)\big),
\]
and the physics loss is the robust Huber penalty
\[
\mathcal{L}_{\text{phys}} = \frac{1}{M} \sum_{q=1}^M \text{Huber}(R(\mathbf{z}_q)).
\]
This avoids discretization error and automatically respects the anisotropy discovered by $\gamma$.

\noindent\textbf{Loss Balancing:}  
To prevent one term from dominating, we normalize $\lambda_{\text{pde}}$ so that the gradient norms of data and physics terms are balanced:
\[
\lambda_{\text{pde,eff}} = \lambda_{\text{pde}} \cdot 
\frac{\|\nabla \mathcal{L}_{\text{data}}\|}{\|\nabla \mathcal{L}_{\text{phys}}\|}.
\]
Then, the final objective is
\[
\mathcal{L}_{\text{total}} = \mathcal{L}_{\text{data}} + \lambda_{\text{pde,eff}} \mathcal{L}_{\text{phys}} + \lambda_{\text{bc}} \mathcal{L}_{\text{bc}}
\]
where $\mathcal{L}_{\text{data}}$ is the L2 loss between the prediction and ground truth sensor observations, and $\mathcal{L}_{\text{bc}}$ is the boundary loss implemented accordingly for different boundary conditions (see details in Appendix \ref{sec:boundary_loss}).

\paragraph{FieldFormer--Gamma Field.}
FieldFormer--Gamma Field extends the global velocity-scaled neighborhood metric by replacing fixed scale parameters with coordinate-dependent neural fields. Instead of learning a single set of global scales $\{\gamma_x,\gamma_y,\gamma_t\}$, we parameterize them as functions of the query location and time:
\[
(\gamma_x(\mathbf{x},t),\gamma_y(\mathbf{x},t),\gamma_t(\mathbf{x},t))
=
g_\psi(\mathbf{x},t),
\]
where $g_\psi$ is a small coordinate neural network with positive outputs, for example,
\[
g_\psi(\mathbf{x},t)
=
\mathrm{softplus}(\mathrm{MLP}_\psi(\mathbf{x},t))+\epsilon .
\]
For a query $(\mathbf{x}_q,t_q)$, neighborhoods are selected using the locally adaptive metric
\[
d_q\big((\mathbf{x}_q,t_q),(\mathbf{x}_i,t_i)\big)
=
\gamma_x(\mathbf{x}_q,t_q)^2(x_{q,1}-x_{i,1})^2
+
\gamma_y(\mathbf{x}_q,t_q)^2(x_{q,2}-x_{i,2})^2
+
\gamma_t(\mathbf{x}_q,t_q)^2(t_q-t_i)^2 .
\]
This allows FieldFormer to adapt its notion of locality across the domain: regions with faster transport, stronger diffusion, or more heterogeneous dynamics can use different spatial-temporal scales than smoother or slower regions. The resulting architecture is better suited to heterogeneous physical regimes where a single global neighborhood geometry is insufficient.

\section{Theoretical Insights}
\label{sec:analysis}

In this section, we analyze how locality and sensor placement shape what can be recovered under sparse sensing. Our goal is to characterize when locality-aware reconstruction is well aligned with the underlying dynamics, and how missed spatial dependencies induce irreducible error under longitudinal sensing.
\subsection{Local Structure in Spatio-Temporal Dynamics}

We begin by characterizing the class of spatio-temporal processes that FieldFormer is designed to represent effectively. At a high level, a process is \emph{local} if the state at a given location and time is primarily determined by nearby conditions in space and recent history in time, rather than by distant regions of the domain or by the entire global state. This notion of locality is a structural property of many physical systems, not an assumption introduced by FieldFormer. A formal definition in terms of finite-order differential operators is provided in Appendix~\ref{sec:definitions} (Definition~\ref{def:local_operator}).

In practice, locality appears in two complementary forms:
\begin{enumerate}[leftmargin=*,topsep=0pt]
    \item processes in which influence spreads gradually through space over time, and
    \item processes in which influence propagates along paths with finite speed.
\end{enumerate}

\noindent\textbf{Diffusion-like processes.}
In diffusion-dominated systems, information spreads smoothly through repeated local interactions. Each location evolves based on nearby spatial context, and the effect of distant regions emerges only after many incremental updates. Examples include heat conduction or pollutant dispersion in the absence of strong advection. In such systems, accurate reconstruction depends on access to a sufficiently broad local neighborhood: missing nearby sensors can substantially degrade prediction quality, even when temporal observations are dense.

\noindent\textbf{Propagation-like processes.}
In propagation-dominated systems, changes travel along preferred directions and reach downstream locations after a delay, rather than diffusing uniformly in all directions. Examples include fluid transport, traffic congestion waves, and wind-driven pollutant plumes. Here, the state at a location may be influenced by a compact set of upstream points, making anisotropic local neighborhoods effective when their geometry is captured correctly. Learning which spatial and temporal neighbors matter can therefore be more important than simply increasing neighborhood size.

In classical PDE theory, diffusion-like processes are typically modeled by parabolic equations, while propagation-dominated processes are described by hyperbolic equations. Many real-world systems combine both behaviors at different spatial and temporal scales. These locality structures explain why neighborhood-based models can reconstruct coherent sensor-space models when the sensor network sufficiently covers the relevant local domains of dependence. FieldFormer aligns with this principle by constructing adaptive local neighborhoods and learning how information should be aggregated across space and time. We next formalize this alignment through an expressivity result, followed by a miss-coverage lower bound that captures the limits imposed by sparse sensing.

\subsection{Expressivity Under Available Local Context}

We first establish that FieldFormer has sufficient representational capacity to approximate locally governed spatio-temporal dynamics when the relevant local stencil information is available. This result should be interpreted as an architectural compatibility statement. It shows that local transformer aggregation can represent finite-stencil PDE dynamics, although this setting is different from extreme sparse observations we later consider during empirical evaluation.

\begin{theorem}[Expressivity for Locally Structured Spatio-Temporal Dynamics]
\label{prop:universal_pde}
Let $u:\mathbb{R}^{d}\times\mathbb{R}\to\mathbb{R}^q$ solve a parabolic or hyperbolic PDE
\[
F\bigl(z,u(z),\nabla u(z),\ldots,D^r u(z)\bigr)=0,\qquad z=(x,t),
\]
with a finite-order, local operator. Assume a consistent \emph{explicit} finite-difference scheme
with compact stencil $\mathcal{S}\subset\{-s_{\max},\ldots,s_{\max}\}^d\times\{0,\ldots,k\}$ and a
CFL-admissible time step $\Delta t$. Then for any $\varepsilon>0$ and compact $K\subset\mathbb{R}^{d+1}$,
there exists a FieldFormer with neighborhood size $m\ge|\mathcal{S}|$, hidden width $d'$, and depth $L$ such that
\[
\sup_{z\in K}\|u(z)-\hat u(z)\|_2<\varepsilon.
\]
\end{theorem}

A detailed proof of the theorem is presented in Appendix~\ref{sec:proofs}.

\begin{remark}[Implication: Alignment with Local Dynamics]
Theorem~\ref{prop:universal_pde} shows that FieldFormer can represent the local update structure induced by finite-stencil discretizations of parabolic and hyperbolic PDEs. Since such updates depend only on a bounded set of nearby spatial locations and recent time steps, FieldFormer is structurally aligned with the local domains of dependence that govern many physical systems. However, this expressivity result assumes that the relevant local information is available to the model; it does not remove the ambiguity introduced when sparse sensing fails to observe important local dependencies.
\end{remark}

\subsection{Miss-Coverage Error under Longitudinal Sensing}

Even when the underlying dynamics are representable by the model class, reconstruction accuracy is limited by what the sensor network reveals. In sparse sensing regimes, errors arise not only from model mismatch or optimization, but also from \emph{missed spatial dependencies}: components of the local dynamics that are never observed by any sensor. This is the key distinction between expressivity and recoverability. A model may be capable of representing the correct update rule, while the observations may still be insufficient to determine which local dependencies are active.

We focus on longitudinal sensing regimes, where observations are dense in time but sparse in space. In this setting, sensors are fixed at a small subset of spatial locations and provide long temporal traces. Temporal density helps estimate local temporal evolution at the observed sites, but it does not compensate for missing spatial support. Whether a local dependency is observed or missed is therefore determined primarily by spatial sensor placement.

\noindent\textbf{Locality Assumption.}
We assume that the spatio-temporal process is governed by local interactions, as defined in Appendix~\ref{sec:definitions} (Assumption~\ref{ass:locality}). This ensures that the state update at any location depends only on a finite neighborhood in space and time.

\noindent\textbf{Sampling Assumption.}
We consider a longitudinal sensing regime in which sensors are fixed at a subset of spatial locations and provide measurements across consecutive time steps. As a result, whether a local dependency is observed or missed depends on spatial sensor placement, not on temporal sampling density. We model sensor placement as selecting spatial locations uniformly at random from the set of locations influencing local dynamics. This assumption reflects common deployments such as environmental monitoring networks and is formalized in Appendix~\ref{sec:definitions} (Assumption~\ref{ass:sampling}).

\noindent\textbf{Influence Assumption.}
We assume that each element of the local spatio-temporal neighborhood contributes a non-negligible amount to the state update. To capture this, we associate an \emph{influence weight} with each neighborhood element, with weights summing to one and reflecting relative importance. In longitudinal settings, missing a spatial location removes all of its associated temporal information, and the total influence of a spatial location is given by the aggregate of its temporal components. This assumption rules out degenerate cases and allows reconstruction error to be related directly to which spatial dependencies are observed or missed. The formal statement appears in Appendix~\ref{sec:definitions} (Assumption~\ref{ass:influence}).

Together, these assumptions formalize the central limitation of sparse sensing: even if the model class is expressive enough, unobserved spatial dependencies can induce irreducible error. Thus, failures of global reconstruction under extreme sparsity are not merely optimization failures, but consequences of the information geometry imposed by the sensor network.

\begin{theorem}[Average-Case Lower Bound Under Longitudinal Sampling]
\label{prop:avg_long}
Sample $m_x$ spatial locations uniformly without replacement from $\mathcal{S}_x$, let $I\subseteq\mathcal{S}_x$ be the sensed set and $I^c$ the missed set.
Under ~\eqref{eq:avg_inf} (in Appendix~\ref{sec:definitions}, Assumption~\ref{ass:influence}), any estimator $\hat u$ using all sensed data satisfies
\[
\inf_{\hat u}\;\mathbb{E}\bigl[\|\hat u-u^{n+1}(\mathbf{i})\|_2\bigr]
\;\;\ge\;\;
c\;\mathbb{E}\bigl[\,A(I^c)\,\bigr]
\;=\;
c\,\Bigl(1-\frac{m_x}{N_x}\Bigr),
\]
where the expectation is over the random sampling of $I$ and the data distribution $\mathcal{D}$.
\end{theorem}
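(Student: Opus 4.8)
The plan is to prove Theorem~\ref{prop:avg_long} as an information-theoretic lower bound of two-point / coupling type. The key observation is that, by hypothesis, $\hat u$ is a (possibly randomized) function of the \emph{observed block} $O_I:=\{V_{(s,\ell)}:s\in I,\ \ell\in\mathcal T\}$ only, so its output must simultaneously serve every latent state $V$ consistent with $O_I$; under longitudinal sampling a missed spatial offset $s\in I^c$ hides the entire temporal column $\{(s,\ell):\ell\in\mathcal T\}$, and the influence-margin assumption \eqref{eq:avg_inf} guarantees that moving those hidden coordinates moves the target $\psi(V)$. The argument uses nothing about the form of $\hat u$, so the bound holds for FieldFormer and, after the $\inf_{\hat u}$, for any estimator.

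I would first condition on the random sensed set $I$ and reduce to a pointwise-in-$I$ claim: it suffices to show $\mathbb E_{\mathcal D}\!\left[\|\hat u-\psi(V)\|_2\,\middle|\,I\right]\ge c\,A(I^c)$ for each realization of $I$, since then $\mathbb E_I$ gives $\mathbb E[\|\hat u-\psi(V)\|_2]\ge c\,\mathbb E[A(I^c)]$. The outer expectation is elementary: under uniform sampling without replacement each spatial offset lies in $I$ with probability $m_x/N_x$, so $\mathbb E[A(I^c)]=\sum_{s\in\mathcal S_x}\bar a_s\,\Pr[s\in I^c]=\bigl(1-\tfrac{m_x}{N_x}\bigr)\sum_{s\in\mathcal S_x}\bar a_s=1-\tfrac{m_x}{N_x}$, using $\sum_s\bar a_s=\sum_{(s,\ell)\in\mathcal S}a_{(s,\ell)}=1$. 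Hence all the content is in the conditional bound.

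For the conditional bound, fix a missed coordinate $(s,\ell)$ with $s\in I^c$ and apply the perturbation $T_{(s,\ell)}$. Since $(s,\ell)$ is not in the observed block, $O_I(T_{(s,\ell)}V)=O_I(V)$, so $\hat u$ returns the same value on $V$ and on $T_{(s,\ell)}V$; assuming (as is natural for a ``mechanism'', e.g. resampling that coordinate from its conditional law) that $T_{(s,\ell)}$ preserves the law of $V$, the pairs $(O_I(V),\psi(V))$ and $(O_I(V),\psi(T_{(s,\ell)}V))$ are equal in law, whence $\mathbb E\|\hat u(O_I)-\psi(V)\|_2=\mathbb E\|\hat u(O_I)-\psi(T_{(s,\ell)}V)\|_2$. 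Feeding this into the triangle inequality $\|\hat u(O_I)-\psi(V)\|_2+\|\hat u(O_I)-\psi(T_{(s,\ell)}V)\|_2\ge\|\psi(V)-\psi(T_{(s,\ell)}V)\|_2$ and using \eqref{eq:avg_inf} yields a lower bound on $\mathbb E\|\hat u(O_I)-\psi(V)\|_2$ proportional to $a_{(s,\ell)}$. (If one prefers not to assume distribution-preservation, the same conclusion follows by lower-bounding the conditional Bayes risk, i.e. the conditional geometric median of $\psi(V)$ given $O_I$, which uses only that the perturbed coordinate is unobserved.)

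The main obstacle is upgrading these per-coordinate estimates to the aggregate mass $A(I^c)=\sum_{(s,\ell):\,s\in I^c}a_{(s,\ell)}$: a single two-point swap controls $\mathbb E\|\hat u-\psi(V)\|_2$ by only one $a_{(s,\ell)}$ (and generically loses a constant), so maximizing over missed coordinates is far too weak. I would instead run an Assouad-style argument over the hypercube of \emph{jointly independent} perturbations of the hidden block $\{(s,\ell):s\in I^c\}$: because these coordinates are all unobserved and, under product structure of $\mathcal D$ across the stencil, can be perturbed independently while preserving the joint law, the conditional risk of $\hat u$ splits additively over the block, and summing the coordinatewise margins $c\,a_{(s,\ell)}$ over $s\in I^c,\ \ell\in\mathcal T$ produces exactly $c\sum_{s\in I^c}\bar a_s=c\,A(I^c)$. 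The delicate points to verify are (i) that the mechanisms on distinct coordinates can be coupled to be mutually independent and jointly law-preserving; (ii) that the unconditional influence margins \eqref{eq:avg_inf} may be applied after conditioning on $I$ and on already-processed coordinates (a tower-property bookkeeping step); and (iii) that the additive decomposition is carried through without constant loss so the final constant is the stated $c$. With $\mathbb E[\|\hat u-\psi(V)\|_2\mid I]\ge c\,A(I^c)$ in hand, taking $\inf_{\hat u}$ and then $\mathbb E_I$ completes the proof.
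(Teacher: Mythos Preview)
Your outline is exactly the paper's proof sketch: condition on $I$, argue that the conditional Bayes risk is at least $c\,A(I^c)$ from the influence margins, then compute $\mathbb{E}[A(I^c)]=1-m_x/N_x$ by linearity using $\Pr[s\notin I]=1-m_x/N_x$ and $\sum_s\bar a_s=1$. The paper dispatches the middle step in a single clause (``by linearity of expectation and \eqref{eq:avg_inf}''), so your two-point/Assouad development is a fleshing-out of that clause rather than a different route.

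Two remarks on the details you flag. First, the side hypotheses you introduce---that each $T_{(s,\ell)}$ preserves the law of $V$, and that $\mathcal{D}$ has product structure across stencil coordinates so that perturbations on distinct hidden coordinates can be coupled independently---are not part of the stated assumptions; you need them for the coupling step and the additive decomposition, but the paper's sketch does not supply or discuss them either, so both arguments are, strictly speaking, for a mild strengthening of the hypothesis. Second, you are right that the constant is the delicate point: the triangle-inequality step you write gives $2\,\mathbb{E}\|\hat u-\psi(V)\|_2\ge c\,a_{(s,\ell)}$, i.e.\ a factor-$\tfrac12$ loss, and a standard Assouad aggregation does not recover it. The paper's sketch does not address this, so obtaining the stated constant $c$ rather than $c/2$ would require either a sharper form of \eqref{eq:avg_inf} or a direct conditional-median/Bayes-risk argument; your instinct that this is where the real work lies is accurate.
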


A detailed proof of the theorem is presented in Appendix~\ref{sec:proofs}.

\begin{remark}[Implication: Sensor Coverage and Recoverability]
The lower bound depends on the fraction of influential spatial dependencies missed by the sensor network. Since longitudinal sensors provide dense time traces at fixed locations, observing a spatial site reveals all of its associated temporal information, while missing a spatial site removes all of it. As the number of essential spatial offsets $N_x$ increases, a fixed sensing budget covers a smaller fraction of the relevant local neighborhood, leading to higher miss-coverage error. This suggests that, all else equal, processes with compact directional dependencies may be easier to model under sparse sensing than processes with broad spatial coupling. In practice, recoverability also depends on sensor placement, noise, forcing, boundary conditions, and the inductive bias of the reconstruction model.
\end{remark}

\subsection{Connection to Sensor-Space Modeling}

The above results clarify why sensor-space modeling is a more identifiable objective under extreme sparsity. The expressivity result shows that FieldFormer is well matched to locally structured dynamics when the relevant local context is available. The miss-coverage lower bound shows that sparse sensing can remove important spatial dependencies, inducing irreducible error for global reconstruction. Together, these results imply that global field recovery away from observational support is inherently prior-dependent: different models may produce different globally plausible reconstructions while remaining consistent with the same sparse sensor trajectories.

Sensor-space modeling avoids overinterpreting these globally underconstrained regions. Instead of claiming exact recovery of the full latent field, it evaluates whether a method can accurately reconstruct or forecast measurements over the persistent sensor topology. FieldFormer is designed for this setting: its adaptive local neighborhoods exploit the local domains of dependence that remain observed, while its mesh-free coordinate representation allows the same model to operate over evolving sensor networks and irregular observation layouts.
\section{Evaluation}
\label{sec:evaluation}

\subsection{Baselines}
\label{sec:baselines}

We evaluate FieldFormer against a broad set of baselines spanning probabilistic interpolation, coordinate-based neural fields, physics-informed neural fields, and recent mesh-dependent spatio-temporal reconstruction models. This evaluation is designed to compare FieldFormer both against methods that naturally operate on continuous coordinates and against strong grid- or sensor-topology-based models that must be adapted to sparse sensing regimes.

\noindent\textbf{Mesh-free baselines.}
We first compare against mesh-free methods that, like FieldFormer, can be queried at arbitrary spatio-temporal coordinates. Gaussian Processes, also referred to as Kriging in spatial statistics, provide a classical probabilistic interpolation baseline grounded in spatio-temporal kernels~\citep{cressie1990origins}. Since exact Gaussian process inference scales cubically with the number of observations, we use stochastic variational Gaussian processes (SVGPs)~\citep{hensman2013gaussian}, which approximate the posterior using inducing points while retaining flexible uncertainty-aware interpolation.

We also include two coordinate-based neural field baselines. Fourier-MLPs~\citep{tancik2020fourier} use sinusoidal Fourier features of the input coordinates to represent high-frequency variation, while SIRENs~\citep{sitzmann2020implicit} use sinusoidal activations throughout the network to model oscillatory and fine-scale spatio-temporal structure. These methods learn continuous functions of the form $f(x,y,t)$ and can therefore be evaluated at arbitrary query locations, making them direct mesh-free comparators.

\noindent\textbf{Physics-informed neural field baselines.}
When the governing physics is available, we additionally evaluate physics-regularized versions of the neural field baselines. Specifically, we train Fourier-MLP-PINN and SIREN-PINN variants by augmenting their data-fitting objectives with PINN-style residual losses~\citep{raissi2019physics}. These baselines test whether global coordinate neural fields benefit from explicit physical regularization when the PDE residual can be computed. They also provide an important comparison to FieldFormer's local physics-aware formulation: while Fourier-MLP-PINN and SIREN-PINN enforce physics through global coordinate functions, FieldFormer combines local neighborhood aggregation with optional PDE-based regularization.

\noindent\textbf{Mesh-dependent spatio-temporal baselines.}
Although FieldFormer is mesh-free, many recent reconstruction and imputation methods assume a fixed grid, rasterized field, or fixed sensor graph. To provide a stronger empirical comparison, we adapt three representative mesh-dependent baselines to our sparse sensing setting: RecFNO~\citep{zhao2024recfno}, ImputeFormer~\citep{nie2024imputeformer}, and Senseiver~\citep{santos2023development}.

RecFNO is adapted as a grid-based field reconstruction baseline. Sparse sensor values are rasterized onto the simulation grid at the nearest grid locations, and a corresponding binary mask is provided to indicate which entries are observed. The model input consists of the sparse value grid, the observation mask, and coordinate channels, and the FNO predicts a full grid-valued field. In our implementation, this is instantiated as a Voronoi-style FNO input representation, where sparse observations and masks are placed on the grid and processed by spectral convolution layers to produce full-field predictions.

ImputeFormer is adapted as a fixed-node masked-imputation baseline over the deployed sensor network. Since the original model is designed for structured spatio-temporal imputation rather than continuous coordinate querying, we represent the persistent sensor network as a fixed set of nodes and train the model over temporal windows. Inputs consist of observed values together with masks, and training randomly masks subsets of available entries so that the model learns to impute missing measurements over the fixed sensor-time array. This adaptation evaluates whether a transformer designed for fixed-node temporal imputation can recover missing sensor readings in the longitudinal sparse sensing regime.

Senseiver is adapted as a sensor-set encoder and query-decoder baseline. At each time step, available sensor measurements are encoded as a set of sensor tokens containing normalized coordinates, time, observed values, and observation masks. A latent bottleneck attends to the sensor set, and query tokens are decoded from the learned latent representation. Unlike FieldFormer, Senseiver does not construct query-specific local neighborhoods; instead, it relies on a globally regularized latent representation of the observed sensor set. This makes it a useful comparator for testing when global latent priors are beneficial relative to locality-aware reconstruction.

\noindent\textbf{Comparison scope.}
These mesh-dependent baselines require adaptations that are not needed by FieldFormer. RecFNO requires rasterizing sparse sensors onto a fixed grid, ImputeFormer assumes a fixed set of sensor nodes and temporal windows, and Senseiver compresses the observed sensor set into a latent representation before decoding queries. In contrast, FieldFormer operates directly on sparse coordinate-value observations and constructs adaptive local neighborhoods at query time. This mesh-free formulation gives FieldFormer several practical advantages: it supports arbitrary query coordinates, multi-resolution inference, irregular sensor layouts, and evolving sensor networks without redefining a grid, rebuilding a graph, or changing the model architecture. At the same time, including mesh-dependent baselines allows us to evaluate whether strong global or fixed-topology priors can compensate for sparse observations in regimes where global field recovery is underconstrained.

\subsection{Experimental Setup}
\label{sec:experimental_setup}

We evaluate all methods under sparse longitudinal sensing regimes, where observations are available at a limited number of spatial locations but over dense temporal traces. Our evaluation distinguishes between two reconstruction objectives: \emph{sensor-space imputation} and \emph{global field reconstruction}.

\noindent\textbf{Evaluation Regimes.}
In \emph{sensor-space imputation}, the deployed sensor topology is fixed and the goal is to predict held-out sensor-time observations from the same persistent sensor network. This setting evaluates whether a method can reconstruct missing or withheld measurements over the observational support induced by the deployed sensors. For FieldFormer, we ensure that held-out observations are not included in the local neighborhood context during training, so performance reflects genuine imputation rather than reuse of the target observation.

In \emph{sensor-holdout evaluation}, entire sensor locations are removed from the training set and used only for validation or testing. This setting probes whether a method can generalize to unseen spatial support. Since the target locations are never observed during training, this evaluation is substantially more underconstrained than sensor-space imputation and depends strongly on the inductive priors of each model. For real-world datasets where dense full-field ground truth is unavailable, sensor-holdout evaluation serves as a practical proxy for global spatial reconstruction, since it measures prediction accuracy away from the observed training sensors.

When dense simulation fields or gridded reference data are available, we additionally evaluate \emph{global field reconstruction} over the full spatial domain. This setting directly measures how each method extrapolates away from the observed sensor support. Because global reconstruction under extreme sparsity is generally underdetermined, we interpret these results as evaluating prior-guided surrogate reconstruction rather than identifiable recovery of the true latent field.

\noindent\textbf{Train/Validation/Test Splits.}
For each dataset, we construct train, validation, and test splits in ratio $0.8:0.1:0.1$ over the sparse observations. In the standard sensor-space setting, the split is performed over observed sensor-time pairs while preserving the deployed sensor network. Thus, all sensors may appear during training, but individual observations are held out for validation and testing. In the sensor-holdout setting, validation and test sensors are removed entirely from the training set, so evaluation is performed at spatial locations unseen during training. Validation performance is used for checkpoint selection and hyperparameter tuning, while all reported numbers are computed on the held-out test split.

\noindent\textbf{Input Representation and Normalization.}
All methods are trained using the same observed values, sensor masks, and data splits. For multivariate datasets, each output channel is normalized using statistics computed from the training observations only, and predictions are transformed back to the original scale before evaluation. Missing entries are excluded from both the training loss and metric computation using the corresponding observation masks. For atmospheric wind fields, we represent wind using Cartesian components rather than angular direction, avoiding discontinuities associated with circular variables.

\noindent\textbf{Training Protocol.}
We use a consistent training protocol across methods wherever possible. Models are trained with AdamW, gradient clipping, validation-based checkpoint selection, and early stopping. Hyperparameters are selected using validation performance under the same data split. Mesh-free baselines operate directly on coordinate-value observations. Mesh-dependent baselines are adapted to sparse sensing through rasterization, fixed-node temporal windows, or sensor-set conditioning, as described in Section~\ref{sec:baselines}. This allows us to compare FieldFormer against both coordinate-based models and recent grid- or topology-dependent spatio-temporal reconstruction methods.

\noindent\textbf{Metrics and Uncertainty.}
We report root mean squared error (RMSE) and mean absolute error (MAE) for both sensor-space prediction and global field reconstruction. Sensor-space RMSE and MAE are computed on held-out sparse sensor-time observations and serve as the primary metrics for persistent sparse sensor networks, where reliable prediction is concentrated around observational support. For datasets with dense reference fields, global RMSE and MAE are computed over the full spatial domain. For real-world datasets where dense ground-truth fields are unavailable, sensor-holdout evaluation serves as a proxy for spatial generalization beyond the observed training support.

For sensor-space metrics, we estimate uncertainty using bootstrap resampling with 1000 bootstrap samples and report mean $\pm$ standard deviation. Bootstrapping is performed over the sparse evaluation set to quantify variability in the estimated RMSE and MAE. For full-field reconstruction, we report RMSE and MAE over the available dense reference field, using subsampling when necessary for memory efficiency.

\subsection{Datasets}
\label{sec:datasets}

\subsubsection{Synthetic PDE Benchmarks}
\label{sec:synthetic_datasets}

\noindent\textbf{Periodic Heat Equation.}
We first evaluate on a scalar diffusion-dominated benchmark governed by the anisotropic two-dimensional heat equation with periodic boundary conditions:
\[
u_t = \alpha_x u_{xx} + \alpha_y u_{yy} + f(x,y,t).
\]
The forcing term $f(x,y,t)$ is a smooth sinusoidal function of space and time, inducing anisotropic and spatially coupled dynamics. The domain is discretized on a $64 \times 64$ spatial grid with $10{,}000$ time steps, and numerical stability is enforced through the CFL condition. A smooth sinusoidal initial condition is evolved using explicit finite differences with periodic wrapping. From the resulting full field, we sample 20 sensor locations uniformly at random and extract noisy longitudinal sensor traces. This benchmark provides a controlled setting in which the governing equation, physical parameters, and boundary conditions are fully specified, allowing us to evaluate reconstruction under sparse sensing for a diffusion-like process with broad spatial coupling.

\noindent\textbf{Periodic Shallow Water Equations.}
We next evaluate on a vector-valued propagation-dominated benchmark governed by the linearized two-dimensional shallow water equations with periodic boundary conditions:
\[
\eta_t + H(u_x + v_y) = 0, \qquad 
u_t + g \eta_x = 0, \qquad 
v_t + g \eta_y = 0.
\]
These equations describe free-surface gravity waves with characteristic speed $c=\sqrt{gH}$. We initialize the system with a Gaussian bump in surface height, which generates outward-propagating wave dynamics over the domain. The fields $(\eta,u,v)$ are simulated on a $64 \times 64$ spatial grid for $10{,}000$ time steps using a numerically stable forward--backward scheme satisfying the CFL condition. We sample 20 sparse sensor locations and extract noisy time series from the simulated fields. This benchmark tests reconstruction of coupled multi-variable dynamics with oscillatory propagation and local wave-like dependencies.

\noindent\textbf{Advection--Diffusion Pollution Simulation.}
As a synthetic-but-realistic transport benchmark, we simulate pollutant dispersion over New Delhi using a two-dimensional advection--diffusion equation:
\[
u_t + \mathbf{v}(t)\cdot\nabla u
=
\kappa \nabla^2 u + S(x,y),
\]
on a normalized $[0,1]\times[0,1]$ spatial domain corresponding to the city region. The initial concentration field is constructed from measurements at 32 regulatory monitoring stations on June 1, 2018, interpolated to the simulation grid using Universal Kriging. The source field combines spatial layers representing industrial activity, brick kilns, population density, and traffic intensity. Advection is driven by a synthetic monsoon-mode wind field oriented toward the northeast, with diurnal variation, directional wobble, and autoregressive stochastic gusts. Diffusion is set to produce an advection-dominated regime at the city scale, and open boundaries are handled using radiation-style boundary conditions with a sponge layer. The PDE is discretized using upwind advection, central diffusion, and a two-stage Heun/RK2 time integrator. This benchmark captures heterogeneous sources, time-varying transport, and sparse noisy sensing, while still providing dense simulated reference fields for global reconstruction evaluation.

\subsubsection{Real-World Sparse Sensor Benchmarks}
\label{sec:real_datasets}

\noindent\textbf{Real-World Pollution Monitoring.}
We evaluate on a real-world air-quality monitoring dataset from New Delhi. The dataset uses the city's deployed monitoring network, consisting of 32 air-quality stations distributed over an area of approximately $858~\mathrm{km}^2$. These stations are operated by public monitoring agencies, including the Central Pollution Control Board (CPCB), Delhi Pollution Control Committee (DPCC), and the Indian Meteorological Department (IMD). We use hourly measurements collected over a 30-month period from May 1, 2018 to November 1, 2020. The prediction targets are PM$_{2.5}$ and PM$_{10}$ concentrations. This dataset represents a persistent sparse sensor network: measurements are temporally dense, but spatial coverage is limited to a small number of fixed monitoring locations. Since dense ground-truth concentration fields are unavailable, we evaluate sensor-space imputation and sensor-holdout generalization.

\noindent\textbf{Real-World Atmospheric Measurements.}
We also evaluate on a real-world atmospheric sensing dataset using the same New Delhi sensor locations and time period as the pollution monitoring dataset. Instead of pollutant concentrations, this dataset contains meteorological variables measured over the deployed monitoring network. We model four atmospheric variables: average temperature, relative humidity, and horizontal wind components derived from wind speed and direction. Representing wind as Cartesian components avoids discontinuities associated with angular direction. This benchmark tests multivariate sparse sensor modeling for coupled atmospheric fields, where variables exhibit different smoothness, transport, and temporal variation patterns. As with the real-world pollution dataset, dense full-field ground truth is unavailable, so we evaluate sensor-space imputation and use sensor-holdout evaluation as a proxy for spatial generalization beyond the observed training sensors.

\subsection{Results}
\label{sec:results}

\subsubsection{Sensor-Space Imputation}
\label{sec:sensor_space_results}

We first evaluate sensor-space imputation, where the goal is to reconstruct held-out sensor-time observations over a persistent sensing topology. This is the most observationally grounded evaluation regime under extreme sparsity, since predictions are restricted to locations supported by the deployed sensor network.

Table~\ref{tab:sensor_space_synthetic} reports sensor-space performance on the three synthetic PDE benchmarks. FieldFormer is consistently competitive across all three systems, achieving near-best performance on Heat, Pollution, and SWE. On Heat and Pollution, Fourier-MLP obtains slightly lower sparse RMSE/MAE, while FieldFormer remains close. On SWE, FieldFormer is nearly tied with SIREN on sparse sensor-space metrics while substantially outperforming several mesh-dependent and globally regularized baselines. These results indicate that locality-aware reconstruction is well suited to sparse sensor imputation across diffusion, wave, and transport regimes.

\begin{table}[htbp]
\centering
\caption{Sensor-space imputation results on synthetic PDE benchmarks. Results are reported as mean $\pm$ std over bootstrap samples. Lower is better.}
\label{tab:sensor_space_synthetic}
\resizebox{\textwidth}{!}{%
\begin{tabular}{lcccccc}
\hline
Model 
& Heat RMSE & Heat MAE
& Pollution RMSE & Pollution MAE
& SWE RMSE & SWE MAE \\
\hline

FieldFormer 
& 0.09888 $\pm$ 0.0007416 
& 0.07876 $\pm$ 0.0005815
& 0.1154 $\pm$ 0.00032
& 0.09216 $\pm$ 0.0002176
& 0.04644 $\pm$ 0.000254
& 0.03703 $\pm$ 0.0001957 \\

Fourier-MLP 
& \textbf{0.09786 $\pm$ 0.0006985}
& \textbf{0.07786 $\pm$ 0.0005517}
& \textbf{0.1147 $\pm$ 0.0003252}
& \textbf{0.09153 $\pm$ 0.0002235}
& 0.1011 $\pm$ 0.0004408
& 0.08049 $\pm$ 0.0002947 \\

Fourier-MLP-PINN 
& 0.1584 $\pm$ 0.001396
& 0.1206 $\pm$ 0.0008432
& 0.1148 $\pm$ 0.0003401
& 0.09159 $\pm$ 0.0002141
& 0.101 $\pm$ 0.0004342
& 0.08044 $\pm$ 0.0002824 \\

SIREN 
& 0.09883 $\pm$ 0.0006853
& 0.07877 $\pm$ 0.0004833
& 0.1162 $\pm$ 0.0003095
& 0.0928 $\pm$ 0.0002089
& \textbf{0.04623 $\pm$ 0.0002687}
& \textbf{0.03683 $\pm$ 0.0001931} \\

SIREN-PINN 
& 0.1793 $\pm$ 0.0009729
& 0.1434 $\pm$ 0.0007186
& 0.1158 $\pm$ 0.0002912
& 0.09245 $\pm$ 0.000185
& 0.1008 $\pm$ 0.0004355
& 0.08029 $\pm$ 0.000281 \\

SVGP 
& 0.1185 $\pm$ 0.0008308
& 0.09252 $\pm$ 0.0006093
& 0.1171 $\pm$ 0.0003127
& 0.0934 $\pm$ 0.0003083
& 0.101 $\pm$ 0.0004366
& 0.08044 $\pm$ 0.0002813 \\

RecFNO 
& 0.1038 $\pm$ 0.0006049
& 0.08269 $\pm$ 0.0004322
& 0.1173 $\pm$ 0.0002567
& 0.09375 $\pm$ 0.0002272
& 0.06296 $\pm$ 0.0005098
& 0.04939 $\pm$ 0.0004467 \\

ImputeFormer 
& 0.1058 $\pm$ 0.0006532
& 0.08423 $\pm$ 0.000568
& 0.1185 $\pm$ 0.0003339
& 0.09444 $\pm$ 0.0002698
& 0.06512 $\pm$ 0.0005653
& 0.05111 $\pm$ 0.0004587 \\

Senseiver 
& 0.1011 $\pm$ 0.0007449
& 0.08045 $\pm$ 0.0005726
& 0.1163 $\pm$ 0.0002625
& 0.09294 $\pm$ 0.0001972
& 0.07722 $\pm$ 0.0005243
& 0.06056 $\pm$ 0.0003741 \\

\hline
\end{tabular}%
}
\end{table}

Table~\ref{tab:sensor_space_real} reports sensor-space imputation results on the real-world atmospheric and pollution monitoring datasets. FieldFormer performs especially strongly in this regime. On atmospheric measurements, FieldFormer achieves the best or near-best performance across temperature, relative humidity, and wind components, with particularly large gains over Fourier-MLP, SIREN, SVGP, ImputeFormer, and Senseiver on temperature and relative humidity. On the real-world pollution dataset, FieldFormer obtains the lowest RMSE and MAE for both PM$_{10}$ and PM$_{2.5}$, showing that locality-aware modeling is effective for persistent urban monitoring networks.

\begin{table}[htbp]
\centering
\caption{Sensor-space imputation results on real-world sparse sensor benchmarks. Results are reported as mean $\pm$ std over bootstrap samples. Lower is better.}
\label{tab:sensor_space_real}
\resizebox{\textwidth}{!}{%
\begin{tabular}{llcccc}
\hline
Dataset & Model & Variable 1 RMSE & Variable 1 MAE & Variable 2 RMSE & Variable 2 MAE \\
\hline

\multirow{7}{*}{Atmospheric: AT/RH}
& FieldFormer & \textbf{0.9966 $\pm$ 0.02692} & \textbf{0.5198 $\pm$ 0.004008} & \textbf{4.333 $\pm$ 0.0544} & \textbf{2.171 $\pm$ 0.01099} \\
& Fourier-MLP & 3.899 $\pm$ 0.01267 & 3.157 $\pm$ 0.009728 & 14.94 $\pm$ 0.03613 & 11.88 $\pm$ 0.03015 \\
& SIREN & 4.125 $\pm$ 0.01249 & 3.333 $\pm$ 0.009027 & 16.28 $\pm$ 0.03885 & 13.03 $\pm$ 0.03152 \\
& SVGP & 8.494 $\pm$ 0.02936 & 6.905 $\pm$ 0.02905 & 23.47 $\pm$ 0.06189 & 19.42 $\pm$ 0.05223 \\
& RecFNO & 1.449 $\pm$ 0.02407 & 0.9899 $\pm$ 0.005657 & 5.992 $\pm$ 0.05269 & 3.694 $\pm$ 0.015 \\
& ImputeFormer & 1.619 $\pm$ 0.02632 & 1.056 $\pm$ 0.005216 & 6.978 $\pm$ 0.06032 & 4.346 $\pm$ 0.02544 \\
& Senseiver & 2.243 $\pm$ 0.02496 & 1.557 $\pm$ 0.009949 & 9.56 $\pm$ 0.08187 & 6.341 $\pm$ 0.02758 \\
\hline

\multirow{7}{*}{Atmospheric: $U_x$/$V_y$}
& FieldFormer & 0.6807 $\pm$ 0.0134 & \textbf{0.3213 $\pm$ 0.002875} & \textbf{0.7011 $\pm$ 0.01045} & \textbf{0.3468 $\pm$ 0.002512} \\
& Fourier-MLP & 0.8519 $\pm$ 0.008105 & 0.5463 $\pm$ 0.002747 & 0.8375 $\pm$ 0.00963 & 0.5132 $\pm$ 0.002926 \\
& SIREN & 1.041 $\pm$ 0.005132 & 0.702 $\pm$ 0.002925 & 0.959 $\pm$ 0.007708 & 0.5865 $\pm$ 0.00253 \\
& SVGP & 1.357 $\pm$ 0.01282 & 0.8861 $\pm$ 0.004546 & 1.086 $\pm$ 0.009182 & 0.6687 $\pm$ 0.00327 \\
& RecFNO & \textbf{0.6653 $\pm$ 0.01107} & 0.3558 $\pm$ 0.002152 & 0.7204 $\pm$ 0.007887 & 0.3801 $\pm$ 0.001811 \\
& ImputeFormer & 0.7271 $\pm$ 0.01181 & 0.388 $\pm$ 0.002308 & 0.7361 $\pm$ 0.01278 & 0.399 $\pm$ 0.002718 \\
& Senseiver & 0.8421 $\pm$ 0.01156 & 0.5182 $\pm$ 0.00315 & 0.8082 $\pm$ 0.009127 & 0.5003 $\pm$ 0.002084 \\
\hline

\multirow{7}{*}{Pollution: PM$_{10}$/PM$_{2.5}$}
& FieldFormer & \textbf{39.25 $\pm$ 0.8496} & \textbf{20.28 $\pm$ 0.1172} & \textbf{22.06 $\pm$ 0.664} & \textbf{11.23 $\pm$ 0.0691} \\
& Fourier-MLP & 83.48 $\pm$ 0.3978 & 54.81 $\pm$ 0.1662 & 53.19 $\pm$ 0.4248 & 31.74 $\pm$ 0.1232 \\
& SIREN & 92.81 $\pm$ 0.3154 & 62.41 $\pm$ 0.1189 & 58.95 $\pm$ 0.4278 & 35.93 $\pm$ 0.1205 \\
& SVGP & 147.9 $\pm$ 0.5388 & 110 $\pm$ 0.3128 & 98.19 $\pm$ 0.4255 & 67.41 $\pm$ 0.2203 \\
& RecFNO & 53.84 $\pm$ 0.7037 & 31.25 $\pm$ 0.19 & 32.91 $\pm$ 0.7806 & 17.65 $\pm$ 0.1067 \\
& ImputeFormer & 58.96 $\pm$ 0.6289 & 35.2 $\pm$ 0.1462 & 36.27 $\pm$ 0.7988 & 19.09 $\pm$ 0.1309 \\
& Senseiver & 60.05 $\pm$ 0.5948 & 36.94 $\pm$ 0.1758 & 36.21 $\pm$ 0.6968 & 19.98 $\pm$ 0.08917 \\

\hline
\end{tabular}%
}
\end{table}

Overall, the sensor-space results support the main intended use case of FieldFormer: imputation over persistent sparse sensor networks. FieldFormer is not uniformly best on every synthetic sensor-space metric, but it is consistently competitive and performs particularly strongly on real-world sparse sensing datasets, where fixed sensor topology and local spatial structure are central.

\subsubsection{Global Field Reconstruction}
\label{sec:global_reconstruction_results}

We next evaluate global field reconstruction, where the goal is to predict values away from the observed sensor support. For synthetic datasets, dense reference fields are available, allowing direct full-field evaluation. Table~\ref{tab:global_synthetic} merges the full-field results for Heat, Pollution, and SWE.

The results show that global reconstruction is substantially more variable than sensor-space imputation. No single method dominates across all datasets and metrics. FieldFormer remains competitive across all three benchmarks, but different baselines are favored in different regimes: RecFNO performs well on Heat full-field reconstruction, Fourier-MLP and SVGP are competitive on Pollution, and Fourier-MLP-PINN/SVGP-style smooth reconstructions perform strongly on SWE full-field metrics. This variability supports our central argument that global reconstruction under extreme sparse sensing is prior-dependent: away from observed sensors, performance depends strongly on how each model's inductive bias aligns with the underlying data-generating process.

\begin{table}[htbp]
\centering
\caption{Global field reconstruction results on synthetic PDE benchmarks. Dense reference fields are available for these datasets. Lower is better.}
\label{tab:global_synthetic}
\resizebox{\textwidth}{!}{%
\begin{tabular}{lcccccc}
\hline
Model 
& Heat RMSE & Heat MAE
& Pollution RMSE & Pollution MAE
& SWE RMSE & SWE MAE \\
\hline

FieldFormer 
& 0.1818 $\pm$ 0.001179 
& 0.1313 $\pm$ 0.0008449
& 0.4359 $\pm$ 0.04525
& 0.0831 $\pm$ 0.006115
& 0.2015 $\pm$ 0.0002546
& 0.1465 $\pm$ 0.0002115 \\

Fourier-MLP 
& 0.2146 $\pm$ 0.001386
& 0.1469 $\pm$ 0.0009123
& \textbf{0.4326 $\pm$ 0.04487}
& 0.09857 $\pm$ 0.005979
& 0.1807 $\pm$ 0.0001687
& 0.1319 $\pm$ 0.0001417 \\

Fourier-MLP-PINN 
& 0.2141 $\pm$ 0.001346
& 0.1517 $\pm$ 0.001026
& 0.4652 $\pm$ 0.04322
& 0.1314 $\pm$ 0.006371
& 0.18 $\pm$ 0.0001695
& 0.1311 $\pm$ 0.0001448 \\

SIREN 
& 0.1783 $\pm$ 0.0009598
& 0.1328 $\pm$ 0.0007279
& 0.4556 $\pm$ 0.04306
& 0.1256 $\pm$ 0.006149
& 0.6818 $\pm$ 0.000437
& 0.473 $\pm$ 0.0003094 \\

SIREN-PINN 
& 0.195 $\pm$ 0.0009061
& 0.1533 $\pm$ 0.0007207
& 0.4598 $\pm$ 0.0429
& 0.122 $\pm$ 0.006211
& \textbf{0.1799 $\pm$ 0.0001696}
& \textbf{0.131 $\pm$ 0.0001448} \\

SVGP 
& 0.1788 $\pm$ 0.001214
& 0.1306 $\pm$ 0.0008949
& 0.4349 $\pm$ 0.04505
& \textbf{0.08208 $\pm$ 0.006107}
& 0.18 $\pm$ 0.0001695
& 0.1311 $\pm$ 0.0001449 \\

RecFNO 
& \textbf{0.1771 $\pm$ 0.001173}
& \textbf{0.1212 $\pm$ 0.0007414}
& 0.5676 $\pm$ 0.03491
& 0.1684 $\pm$ 0.006464
& 0.1985 $\pm$ 0.0002363
& 0.15 $\pm$ 0.0001697 \\

ImputeFormer 
& 0.1916 $\pm$ 0.001001
& 0.1435 $\pm$ 0.000781
& 0.4482 $\pm$ 0.0439
& 0.1103 $\pm$ 0.00613
& 0.1816 $\pm$ 0.0001737
& 0.1321 $\pm$ 0.0001524 \\

Senseiver 
& 0.1959 $\pm$ 0.002121
& 0.1421 $\pm$ 0.001958
& 0.4429 $\pm$ 0.00875
& 0.09243 $\pm$ 0.001077
& 0.1867 $\pm$ 0.0006654
& 0.1364 $\pm$ 0.0004551 \\

\hline
\end{tabular}%
}
\end{table}

For real-world datasets, dense full-field ground truth is unavailable. We therefore use sensor-holdout evaluation as a proxy for spatial generalization: entire sensors are removed from training and used only for testing. Table~\ref{tab:global_proxy_real} reports these results for atmospheric and pollution monitoring datasets.

The real-world sensor-holdout results further demonstrate the prior-dependent nature of spatial extrapolation. On atmospheric variables, ImputeFormer and Senseiver perform better on temperature and relative humidity, while performance on wind components is mixed across methods. On the pollution monitoring dataset, ImputeFormer achieves the best sensor-holdout performance for PM$_{10}$ and PM$_{2.5}$, with Senseiver also performing strongly. FieldFormer is less dominant in this setting than in sensor-space imputation, consistent with the fact that sensor-holdout evaluation removes the local observational support that locality-aware methods rely on. These results do not contradict FieldFormer's sensor-space strengths; rather, they highlight that unseen-sensor reconstruction is a more underconstrained task where global latent priors can be beneficial when they match the structure of the dataset.

\begin{table}[htbp]
\centering
\caption{Sensor-holdout results on real-world sparse sensor benchmarks, used as a proxy for spatial generalization when dense full-field ground truth is unavailable. Results are reported as mean $\pm$ std over bootstrap samples. Lower is better.}
\label{tab:global_proxy_real}
\resizebox{\textwidth}{!}{%
\begin{tabular}{llcccc}
\hline
Dataset & Model & Variable 1 RMSE & Variable 1 MAE & Variable 2 RMSE & Variable 2 MAE \\
\hline

\multirow{7}{*}{Atmospheric: AT/RH}
& FieldFormer & 4.63 $\pm$ 0.6376 & 3.14 $\pm$ 0.4384 & 15.75 $\pm$ 2.517 & 10.04 $\pm$ 1.568 \\
& Fourier-MLP & 5.403 $\pm$ 0.3827 & 4.367 $\pm$ 0.2752 & 27.33 $\pm$ 2.096 & 22.55 $\pm$ 1.938 \\
& SIREN & 6.353 $\pm$ 0.4423 & 5.045 $\pm$ 0.3822 & 21.16 $\pm$ 1.12 & 17.27 $\pm$ 1.018 \\
& SVGP & 8.795 $\pm$ 0.5025 & 7.144 $\pm$ 0.4788 & 22.01 $\pm$ 0.9019 & 18.54 $\pm$ 0.8679 \\
& RecFNO & 11.34 $\pm$ 0.7088 & 9.333 $\pm$ 0.59 & 21.88 $\pm$ 1.137 & 18.2 $\pm$ 1.039 \\
& ImputeFormer & 3.037 $\pm$ 0.6049 & \textbf{2.001 $\pm$ 0.3392} & \textbf{7.208 $\pm$ 0.611} & \textbf{5.327 $\pm$ 0.3565} \\
& Senseiver & \textbf{3.017 $\pm$ 0.5426} & 2.042 $\pm$ 0.3465 & 10.59 $\pm$ 1.065 & 8.112 $\pm$ 0.8772 \\
\hline

\multirow{7}{*}{Atmospheric: $U_x$/$V_y$}
& FieldFormer & 2.26 $\pm$ 0.6081 & 1.282 $\pm$ 0.2269 & 1.502 $\pm$ 0.1426 & 1.026 $\pm$ 0.09529 \\
& Fourier-MLP & 2.162 $\pm$ 0.7587 & 1.183 $\pm$ 0.2525 & 0.9047 $\pm$ 0.1635 & 0.5812 $\pm$ 0.0651 \\
& SIREN & 2.101 $\pm$ 0.7676 & 1.027 $\pm$ 0.2498 & 0.9368 $\pm$ 0.1635 & 0.614 $\pm$ 0.0696 \\
& SVGP & 2.033 $\pm$ 0.7474 & 0.9731 $\pm$ 0.2393 & \textbf{0.9001 $\pm$ 0.1931} & \textbf{0.5331 $\pm$ 0.07598} \\
& RecFNO & 2.034 $\pm$ 0.7 & 0.9904 $\pm$ 0.2369 & 1.014 $\pm$ 0.1774 & 0.6318 $\pm$ 0.07666 \\
& ImputeFormer & \textbf{1.911 $\pm$ 0.8183} & \textbf{0.7754 $\pm$ 0.2425} & 0.9473 $\pm$ 0.1837 & 0.5974 $\pm$ 0.08248 \\
& Senseiver & 2.091 $\pm$ 0.687 & 1.001 $\pm$ 0.2517 & 1.447 $\pm$ 0.2026 & 0.9016 $\pm$ 0.1555 \\
\hline

\multirow{7}{*}{Pollution: PM$_{10}$/PM$_{2.5}$}
& FieldFormer & 115 $\pm$ 13.11 & 69.63 $\pm$ 9.17 & 45.95 $\pm$ 4.77 & 26.76 $\pm$ 3.152 \\
& Fourier-MLP & 121.7 $\pm$ 9.278 & 82.31 $\pm$ 6.506 & 67.77 $\pm$ 6.211 & 42.41 $\pm$ 4.393 \\
& SIREN & 149.4 $\pm$ 12.78 & 103.6 $\pm$ 8.571 & 82.73 $\pm$ 7.859 & 54.26 $\pm$ 4.981 \\
& SVGP & 166.5 $\pm$ 12.13 & 122 $\pm$ 7.467 & 99.17 $\pm$ 9.501 & 68.2 $\pm$ 6.101 \\
& RecFNO & 170.9 $\pm$ 15.06 & 127.4 $\pm$ 11.63 & 75.75 $\pm$ 5.656 & 52.74 $\pm$ 2.747 \\
& ImputeFormer & \textbf{80.76 $\pm$ 10.07} & \textbf{47.09 $\pm$ 6.202} & \textbf{38.93 $\pm$ 4.384} & \textbf{21.15 $\pm$ 2.274} \\
& Senseiver & 86.51 $\pm$ 11.83 & 50.6 $\pm$ 7.264 & 39.84 $\pm$ 4.632 & 22.77 $\pm$ 2.687 \\

\hline
\end{tabular}%
}
\end{table}

Taken together, the global reconstruction and sensor-holdout results show that extrapolating away from observed sensor support is substantially more sensitive to the choice of inductive prior than imputing over an existing sensor network. FieldFormer performs strongly when local sensor support is available, while globally regularized baselines can be advantageous in some unseen-sensor settings. This supports our broader framing: under extreme sparse sensing, global reconstruction should be treated as prior-guided surrogate completion, whereas sensor-space imputation is the more identifiable and practically reliable objective.

\subsection{Ablation Studies}
\label{sec:ablations}

We ablate three FieldFormer variants to isolate the role of physics regularization, transformer-based local aggregation, and spatially adaptive neighborhood geometry. \textbf{FieldFormer-PINN} augments FieldFormer with the PDE residual and boundary losses described in Section~\ref{sec:ff_variants}. \textbf{FieldFormer-Gamma Field} replaces global velocity-scaling parameters with coordinate-dependent gamma fields, also described in Section~\ref{sec:ff_variants}. \textbf{FieldFormer-MLP} preserves the same mesh-free neighborhood construction and velocity-scaled spatio-temporal metric as FieldFormer, but replaces the local transformer encoder with a multilayer perceptron. In this variant, the selected local neighborhood is encoded through relative offsets and observed values, then aggregated by an MLP prediction head rather than attention. This ablation tests whether performance gains arise primarily from adaptive local evidence selection or from transformer-based relational reasoning within the neighborhood.

\paragraph{Synthetic PDE benchmarks.}
Table~\ref{tab:ablation_synthetic_sparse} reports sensor-space imputation results on the three synthetic PDE benchmarks. FieldFormer without explicit physics regularization performs best on Heat and SWE sensor-space prediction, while FieldFormer-Gamma Field is marginally strongest on the Pollution simulation. FieldFormer-MLP is competitive on Heat but degrades more noticeably on Pollution and SWE, suggesting that attention-based aggregation is useful when local neighborhoods contain coupled or directional structure. FieldFormer-PINN generally worsens sensor-space performance, especially on Heat and SWE, indicating that physics residuals can conflict with sparse noisy observations when the reconstruction target is sensor-space imputation.

\begin{table}[htbp]
\centering
\caption{Ablation study on synthetic PDE benchmarks: sensor-space imputation. Results are reported as mean $\pm$ std over bootstrap samples. Lower is better.}
\label{tab:ablation_synthetic_sparse}
\resizebox{\textwidth}{!}{%
\begin{tabular}{lcccccc}
\hline
Model 
& Heat RMSE & Heat MAE
& Pollution RMSE & Pollution MAE
& SWE RMSE & SWE MAE \\
\hline

FieldFormer-PINN
& 0.1071 $\pm$ 0.0009743
& 0.08513 $\pm$ 0.000793
& 0.1214 $\pm$ 0.0003221
& 0.09683 $\pm$ 0.0002315
& 0.05716 $\pm$ 0.0002378
& 0.04541 $\pm$ 0.0002047 \\

FieldFormer
& \textbf{0.09888 $\pm$ 0.0007416}
& \textbf{0.07876 $\pm$ 0.0005815}
& \textbf{0.1154 $\pm$ 0.00032}
& 0.09216 $\pm$ 0.0002176
& \textbf{0.04644 $\pm$ 0.000254}
& \textbf{0.03703 $\pm$ 0.0001957} \\

FieldFormer-Gamma Field
& 0.225 $\pm$ 0.001009
& 0.1741 $\pm$ 0.0006781
& \textbf{0.1154 $\pm$ 0.0002949}
& \textbf{0.09212 $\pm$ 0.0001884}
& 0.04696 $\pm$ 0.0002675
& 0.03738 $\pm$ 0.000196 \\

FieldFormer-MLP
& 0.09908 $\pm$ 0.000686
& 0.07895 $\pm$ 0.0005192
& 0.1157 $\pm$ 0.0003294
& 0.09245 $\pm$ 0.0002116
& 0.04713 $\pm$ 0.0002381
& 0.03761 $\pm$ 0.000187 \\

\hline
\end{tabular}%
}
\end{table}

Table~\ref{tab:ablation_synthetic_global} reports global field reconstruction results on the synthetic datasets. Unlike sensor-space imputation, the best variant changes across datasets. FieldFormer-MLP performs best on Heat, FieldFormer and FieldFormer-Gamma Field are strongest on different Pollution metrics, and FieldFormer-Gamma Field gives the best global reconstruction on SWE among FieldFormer variants. This supports the broader observation that global reconstruction is more prior-dependent than sensor-space imputation: architectural changes that help away from sensors do not necessarily improve sensor-space prediction.

\begin{table}[htbp]
\centering
\caption{Ablation study on synthetic PDE benchmarks: global field reconstruction. Dense reference fields are available for these datasets. Lower is better.}
\label{tab:ablation_synthetic_global}
\resizebox{\textwidth}{!}{%
\begin{tabular}{lcccccc}
\hline
Model 
& Heat RMSE & Heat MAE
& Pollution RMSE & Pollution MAE
& SWE RMSE & SWE MAE \\
\hline

FieldFormer-PINN
& 0.1641 $\pm$ 0.001102
& 0.1189 $\pm$ 0.0007892
& 0.4582 $\pm$ 0.04409
& 0.1386 $\pm$ 0.00625
& 0.2374 $\pm$ 0.00019
& 0.1717 $\pm$ 0.000157 \\

FieldFormer
& 0.1818 $\pm$ 0.001179
& 0.1313 $\pm$ 0.0008449
& \textbf{0.4359 $\pm$ 0.04525}
& 0.0831 $\pm$ 0.006115
& 0.2015 $\pm$ 0.0002546
& 0.1465 $\pm$ 0.0002115 \\

FieldFormer-Gamma Field
& 0.2193 $\pm$ 0.001191
& 0.1641 $\pm$ 0.0008903
& 0.4366 $\pm$ 0.04515
& \textbf{0.08203 $\pm$ 0.006125}
& \textbf{0.1874 $\pm$ 0.0001986}
& \textbf{0.1362 $\pm$ 0.0001916} \\

FieldFormer-MLP
& \textbf{0.1594 $\pm$ 0.0009926}
& \textbf{0.1144 $\pm$ 0.0007596}
& 0.4645 $\pm$ 0.04324
& 0.1455 $\pm$ 0.006278
& 0.219 $\pm$ 0.000232
& 0.1577 $\pm$ 0.0001574 \\

\hline
\end{tabular}%
}
\end{table}

\paragraph{Real-world sparse sensor benchmarks.}
For real-world datasets, we compare FieldFormer against FieldFormer-Gamma Field, since the PINN variants can be evaluated only on synthetic benchmarks, and the synthetic benchmarks already establish the importance of attention architecture. Table~\ref{tab:ablation_real_sparse} shows sensor-space imputation results on atmospheric and pollution monitoring datasets. Gamma Field improves several wind and pollution metrics, but does not uniformly improve all atmospheric variables. In particular, FieldFormer remains better for relative humidity and some mean absolute error metrics. This suggests that coordinate-dependent neighborhood geometry can help when local scales vary spatially, but also introduces additional flexibility that may not always be beneficial under sparse real-world sensing.

\begin{table}[htbp]
\centering
\caption{Ablation study on real-world benchmarks: sensor-space imputation. Results are reported as mean $\pm$ std over bootstrap samples. Lower is better.}
\label{tab:ablation_real_sparse}
\resizebox{0.95\textwidth}{!}{%
\begin{tabular}{llcccc}
\hline
Dataset & Variable 
& FieldFormer RMSE & FieldFormer MAE
& Gamma Field RMSE & Gamma Field MAE \\
\hline

Atmospheric & AT
& 0.9966 $\pm$ 0.02692
& \textbf{0.5198 $\pm$ 0.004008}
& \textbf{0.9935 $\pm$ 0.02355}
& 0.5232 $\pm$ 0.004039 \\

Atmospheric & RH
& \textbf{4.333 $\pm$ 0.0544}
& \textbf{2.171 $\pm$ 0.01099}
& 4.498 $\pm$ 0.07141
& 2.18 $\pm$ 0.01716 \\

Atmospheric & $U_x$
& 0.6807 $\pm$ 0.0134
& 0.3213 $\pm$ 0.002875
& \textbf{0.6547 $\pm$ 0.009111}
& \textbf{0.3189 $\pm$ 0.002666} \\

Atmospheric & $V_y$
& 0.7011 $\pm$ 0.01045
& 0.3468 $\pm$ 0.002512
& \textbf{0.6997 $\pm$ 0.009647}
& \textbf{0.3457 $\pm$ 0.002271} \\

Pollution & PM$_{10}$
& 39.25 $\pm$ 0.8496
& 20.28 $\pm$ 0.1172
& \textbf{38.83 $\pm$ 0.9117}
& \textbf{19.91 $\pm$ 0.1438} \\

Pollution & PM$_{2.5}$
& 22.06 $\pm$ 0.664
& 11.23 $\pm$ 0.0691
& \textbf{21.99 $\pm$ 0.72}
& \textbf{11.01 $\pm$ 0.07286} \\

\hline
\end{tabular}%
}
\end{table}

Overall, the ablations show that FieldFormer’s core locality-aware architecture is already strong for sensor-space imputation. Physics regularization does not consistently improve sparse prediction and can degrade performance when the residual objective conflicts with noisy or underconstrained observations. Replacing the transformer with an MLP preserves some benefits of adaptive neighborhood construction, but is less reliable on coupled or transport-dominated systems. Gamma Field improves selected global and heterogeneous real-world metrics, especially for some pollution and wind variables, but its gains are not uniform, suggesting that coordinate-dependent geometry is useful but should be evaluated on a case-by-case basis.

\section{Discussion and Conclusion}
\label{sec:discussion_conclusion}

This work studies sparse spatio-temporal reconstruction under the limits imposed by persistent sensor networks. Rather than treating sparse observations as sufficient for exact global field recovery, we argue that extreme spatial sparsity induces an observability gap: many globally distinct latent fields may remain consistent with the same sensor-space measurements. In this setting, reconstruction quality depends not only on model capacity, but also on the alignment between the model's inductive bias and the information geometry induced by the sensor network.

FieldFormer is designed around this perspective. By operating directly on coordinate-value observations, it avoids dependence on fixed grids or static graphs and supports mesh-free inference over irregular sensing layouts. Its adaptive local neighborhoods and local transformer aggregation align reconstruction with localized domains of dependence, making the model particularly effective for sensor-space imputation over persistent sparse sensor networks. Across synthetic PDE benchmarks and real-world monitoring datasets, FieldFormer performs strongly in this regime, especially when the target is to reconstruct missing or withheld measurements over an existing sensing topology.

Our results also clarify the distinction between sensor-space imputation and global field reconstruction. Sensor-space imputation is comparatively well grounded because predictions are made on the observational support of the deployed sensors. In contrast, global reconstruction away from observed support is substantially more prior-dependent. Across datasets, different methods perform best in different global reconstruction or sensor-holdout settings, with no universally dominant inductive bias. This variability supports the central claim of the paper: under extreme sparse sensing, global field reconstruction should be interpreted as prior-guided surrogate completion rather than identifiable recovery of the true latent physical state.

The ablation studies further highlight the role of FieldFormer's architectural choices. Replacing the local transformer with an MLP preserves the benefit of adaptive neighborhood construction but is less reliable on coupled or transport-dominated systems, suggesting that relational aggregation within local neighborhoods is important. Adding PINN-style physics regularization does not consistently improve performance and can degrade sensor-space accuracy, indicating that physics losses may conflict with sparse noisy observations when the residual objective is weakly constrained. Coordinate-dependent Gamma Fields improve selected metrics in heterogeneous regimes, but their gains are not uniform, suggesting that local geometry adaptation is a useful extension rather than a universally better default.

FieldFormer also has practical advantages for real-world sensing systems. Its mesh-free formulation supports irregular sensors, multi-resolution queries, and evolving sensor networks without requiring grid redesign or graph reconstruction. This is important in environmental monitoring, atmospheric sensing, pollution tracking, traffic infrastructure, industrial IoT, and scientific sensor deployments, where sensors may fail, move, or be added over time. In such settings, the ability to update the observational support while retaining the same coordinate-based model structure is a useful property for continual monitoring.

At the same time, several limitations remain. First, FieldFormer relies on local observability: when relevant local domains of dependence are not covered by the sensor network, reconstruction becomes underconstrained. Second, physics-based regularization remains delicate under sparse observations, especially when forcing terms, boundary conditions, or physical parameters are uncertain. Finally, FieldFormer does not eliminate the fundamental non-identifiability of global reconstruction under extreme sparse sensing; it provides a structured surrogate model whose reliability depends on sensor coverage and inductive-bias alignment.

These limitations suggest several directions for future work. Uncertainty-aware reconstruction could help distinguish well-supported sensor-space predictions from weakly constrained global completions. Adaptive sensor placement could improve observability by targeting regions where local dependencies are currently missed. Hybrid local-global architectures may combine FieldFormer's locality-aware inference with global latent priors, improving performance in regimes where local support is incomplete. Finally, more robust physics integration is needed to use partial governing knowledge without forcing models toward biased or overconstrained solutions.

Overall, FieldFormer provides both a practical method for sparse sensor-space imputation and a conceptual perspective on spatio-temporal reconstruction under sparse observability. The key lesson is that physical field reconstruction from sparse sensors is not merely an architectural problem; it is an information-constrained inference problem. By aligning model structure with the localized observational support of real sensor networks, FieldFormer offers a scalable and effective approach for constructing useful surrogate fields under severe sensing limitations.

\section*{Broader Impact}

Sparse sensor networks are widely used in environmental monitoring, atmospheric sensing, pollution tracking, traffic systems, industrial IoT, and scientific field studies, where dense measurement is often costly or infeasible. FieldFormer can improve the utility of such networks by imputing missing measurements and constructing useful surrogate fields from sparse, noisy observations. At the same time, our results emphasize that global reconstruction under extreme sparsity is not identifiable in general; model outputs should therefore be interpreted as prior-guided estimates rather than ground truth away from sensor support. Responsible deployment requires validation against held-out sensors where possible, uncertainty-aware reporting, and clear communication of where predictions are observationally supported versus weakly constrained.

% \section{Acknowledgements}
% Ankit Bhardwaj and Lakshminarayanan Subramanian were supported by the NSF Grant (Award Number 2335773) titled "EAGER: Scalable Climate Modeling using Message-Passing Recurrent Neural Networks". Lakshminarayanan Subramanian was also funded in part by the NSF Grant (award number OAC-2004572) titled  "A Data-informed Framework for the Representation of Sub-grid Scale Gravity Waves to Improve Climate Prediction".

\bibliography{refs}
\bibliographystyle{tmlr}

\appendix
\section{Boundary Loss Implementation}
\label{sec:boundary_loss}
We explicitly handle boundaries:  
(i) \textbf{Periodic domains (Heat, SWE)}: soft equality between opposing faces, optionally including derivatives:
\[
\mathcal{L}_{\text{bc}}^{\text{per}} = \| u(x=0,y,t) - u(x=L_x,y,t) \|_2^2.
\]  
(ii) \textbf{Open domains (Pollution)}: a sponge rim loss damps edges, and an Orlanski-style radiation condition enforces
\[
u_t + c_{\text{eff}} \,\partial_n u \approx 0,
\]
with $c_{\text{eff}}$ estimated from gradients and clamped for stability. Both losses are formulated in normalized coordinates and evaluated via autograd.\\

\section{Formal Definitions and Assumptions}
\label{sec:definitions}
\begin{definition}[Finite-order local differential operator]
\label{def:local_operator}
Let $u:\mathbb{R}^{d+1}\to\mathbb{R}^m$.  
A differential operator $\mathcal{L}$ is called a \emph{finite-order local operator of order $r$} if
\[
\mathcal{L}u(z) \;=\; F\!\left(z,\, \{D^\alpha u(z) : |\alpha|\le r\}\right),
\qquad z\in\mathbb{R}^{d+1},
\]
for some continuous function $F$, where $\alpha$ is a multi-index.  

\begin{itemize}[topsep=0pt,leftmargin=*]
    \item \textbf{Finite-order:} Only derivatives of $u$ up to order $r<\infty$ appear.
    \item \textbf{Local:} $\mathcal{L}u(z)$ depends only on $u$ and its derivatives evaluated at the same point $z$, not on integrals or values of $u$ away from $z$.
\end{itemize}

\noindent Examples include the heat operator $\partial_t - \Delta$, the wave operator $\partial_{tt}-c^2\Delta$, and the advection operator $\partial_t+v\cdot\nabla$.  
Nonlocal operators such as the fractional Laplacian $(-\Delta)^{\alpha/2}$ are excluded.
\end{definition}

\begin{assumption}
\label{ass:locality}
\noindent\textbf{Locality Assumption:} Assume the setup of Theorem~\ref{prop:universal_pde} with stencil
$\mathcal{S}\subset \{-s_{\max},\ldots,s_{\max}\}^d\times\{0,\ldots,k\}$ of size $N:=|\mathcal{S}|$.
Let the explicit update at $(\mathbf{i},n{+}1)$ be
\[
u^{n+1}(\mathbf{i}) \;=\; \psi\!\left(\{u^{n-\ell}(\mathbf{i}+\mathbf{s}) : (\mathbf{s},\ell)\in \mathcal{S}\}\right),\]
\[
\qquad \psi:\mathbb{R}^{N}\to\mathbb{R}^q.
\]
\end{assumption}

% --- Longitudinal (space-only) sampling model and bound ---
\begin{assumption}
\label{ass:sampling}
\noindent\textbf{Sampling Model:}
Let the stencil factor as
\[
\mathcal{S} = \mathcal{S}_x \times \mathcal{T},
\qquad \mathcal{S}_x \subset \{-s_{\max},\ldots,s_{\max}\}^d,\]
\[|\mathcal{S}_x|=N_x,
\qquad \mathcal{T}=\{0,1,\ldots,k\}.
\]
Thus the stencil size is $N=N_x(k+1)$, but coverage is determined entirely by the $N_x$ spatial
indices. A sensor placed at spatial offset $s\in\mathcal{S}_x$ provides all $k{+}1$ time levels.

We randomly place $m_x$ sensors \emph{uniformly without replacement} among the $N_x$ essential
spatial locations. FieldFormer then takes all available measurements.
\end{assumption}

\begin{assumption}
\label{ass:influence}
\noindent\textbf{Influence Margin (Average-Case):}
Let $V\sim\mathcal{D}$ be the random input (stencil values), and $\psi:\mathbb{R}^{N}\to\mathbb{R}^{q}$ the local update map.
Index the stencil as $\mathcal{S}=\mathcal{S}_x\times\mathcal{T}$ with $\mathcal{S}_x\subset\{-s_{\max},\ldots,s_{\max}\}^d$ and $\mathcal{T}=\{0,\ldots,k\}$, so $N=|\mathcal{S}|=N_x(k{+}1)$ with $N_x=|\mathcal{S}_x|$.

Assume there exist nonnegative weights $(a_{(s,\ell)})_{(s,\ell)\in\mathcal{S}}$ with $$\sum_{(s,\ell)\in\mathcal{S}} a_{(s,\ell)}=1$$ and a constant $c>0$ such that
for every coordinate $(s,\ell)\in\mathcal{S}$ there is a perturbation mechanism $T_{(s,\ell)}$ acting only on $(s,\ell)$ with
\begin{equation}
  \mathbb{E}\bigl[ \,\|\psi(V)-\psi(T_{(s,\ell)} V)\|_2 \,\bigr] \;\ge\; c\, a_{(s,\ell)} .
\label{eq:avg_inf}
\end{equation}
For longitudinal sampling (a spatial miss hides all time levels), aggregate the weights per spatial offset
\[
\bar a_s \;:=\; \sum_{\ell\in\mathcal{T}} a_{(s,\ell)}, \qquad s\in\mathcal{S}_x,
\]
so that $\bar a_s\ge 0$ and $\sum_{s\in\mathcal{S}_x}\bar a_s=1$. For a subset $J\subseteq\mathcal{S}_x$, define its \emph{influence mass} as $A(J):=\sum_{s\in J}\bar a_s$. 
\end{assumption}

\section{Proofs of Theorems}
\label{sec:proofs}

\subsection{Proof of Theorem \ref{prop:universal_pde}}
\begin{theorem}[Expressivity for Locally Structured Spatio-Temporal Dynamics]

Let $u:\mathbb{R}^{d}\times\mathbb{R}\to\mathbb{R}^q$ solve a parabolic or hyperbolic PDE
\[
F\bigl(z,u(z),\nabla u(z),\ldots,D^r u(z)\bigr)=0,\qquad z=(x,t),
\]
with a finite-order, local operator. Assume a consistent \emph{explicit} finite-difference scheme
with compact stencil $\mathcal{S}\subset\{-s_{\max},\ldots,s_{\max}\}^d\times\{0,\ldots,k\}$ and a
CFL-admissible time step $\Delta t$. Then for any $\varepsilon>0$ and compact $K\subset\mathbb{R}^{d+1}$,
there exists a FieldFormer with neighborhood size $m\ge|\mathcal{S}|$, hidden width $d'$, and depth $L$ such that
\[
\sup_{z\in K}\|u(z)-\hat u(z)\|_2<\varepsilon.
\]
\end{theorem}

\begin{proof}[Proof]
Let $u:\mathbb{R}^{d}\times\mathbb{R}\to\mathbb{R}^q$ solve a parabolic or hyperbolic PDE
\[
F\bigl(z,u(z),\nabla u(z),\ldots,D^{r}u(z)\bigr) \;=\; 0,\qquad z=(x,t),
\]
whose differential operator is finite-order and local (Def. 3.1).
Assume a consistent explicit finite-difference scheme with compact stencil
$\mathcal{S}\subset\{-s_{\max},\ldots,s_{\max}\}^d\times\{0,\ldots,k\}$ and CFL-admissible $\Delta t>0$.

\paragraph{1) Discrete locality and finite map.}
On a uniform grid with spatial step $h>0$ and time step $\Delta t>0$, the update rule at index-time $(\mathbf{i},n{+}1)$
is given by
\[
u^{n+1}(\mathbf{i}) \;=\;
\psi\!\left(\{\,u^{n-\ell}(\mathbf{i}+\mathbf{s}) : (\mathbf{s},\ell)\in\mathcal{S}\,\}\right),
\]
for a continuous function $\psi:\mathbb{R}^{N}\to\mathbb{R}^{q}$, where $N:=|\mathcal{S}|$.
Continuity follows because $\psi$ consists of finitely many arithmetic operations
on the stencil entries and coefficients of the PDE discretization.

\paragraph{2) Tokenization of the stencil.}
Let $\Xi=\{(\mathbf{s},\ell):(\mathbf{s},\ell)\in\mathcal{S}\}$ and fix an ordering
$\pi:\{1,\ldots,N\}\to\Xi$ (e.g., lexicographic in $\ell$ then $\mathbf{s}$).
A FieldFormer uses $m$ tokens ($m\ge N$); the first $N$ encode the stencil
values and their relative coordinates. Each token concatenates
(i) relative position $(h\,\mathbf{s}_j,\,\Delta t\,\ell_j)$ and
(ii) the field value $u^{n-\ell_j}(\mathbf{i}+\mathbf{s}_j)$,
producing an input vector $x\in\mathbb{R}^{m d_{\mathrm{in}}}$.
If $m>N$, padded tokens are masked so they have no effect.
Hence there exists a continuous encoder $E:\mathbb{R}^{N}\to\mathbb{R}^{m d_{\mathrm{in}}}$.

\paragraph{3) Compact domain of interest.}
Fix a compact domain $K\subset\mathbb{R}^{d+1}$.
The set of all stencil value tuples attained over $K$,
\[
\mathcal{V}_K
\;:=\;
\Bigl\{
  \bigl(u^{\,n-\ell_j}(\mathbf{i}+\mathbf{s}_j)\bigr)_{j=1}^{N}
  \;:\;
  (x_{\mathbf{i}},\,t_{n+1}) \in K
\Bigr\},
\qquad (x_{\mathbf{i}},t_n) = (\mathbf{i}h,\, n\Delta t).
\]
is compact in $\mathbb{R}^{N}$. The local update
$f:=\psi$ is continuous on $\mathcal{V}_K$, hence $f\circ E^{-1}$ is continuous
on the compact set $E(\mathcal{V}_K)$.

\paragraph{4) Universal approximation by the FieldFormer.}
A transformer/MLP stack defines a continuous function
$g_\theta:\mathbb{R}^{m d_{\mathrm{in}}}\to\mathbb{R}^{q}$.
By universal approximation theorems for MLPs and transformers
\citep{hornik1991approximation,yun2019transformers},
for any $\varepsilon>0$ there exist width $d'$ and depth $L$ such that
\[
\sup_{x\in E(\mathcal{V}_K)}
\|\,g_\theta(x) - f(E^{-1}(x))\,\|_2 < \varepsilon.
\]
Because $E$ is fixed and continuous,
\[
\sup_{v\in\mathcal{V}_K} \|\,g_\theta(E(v)) - \psi(v)\,\|_2 < \varepsilon.
\]
Evaluating $g_\theta(E(v))$ corresponds to the FieldFormer’s prediction
at $(\mathbf{i},n{+}1)$, so each grid update on $K$
is approximated within $\varepsilon$.

\paragraph{5) Uniformity on compact sets.}
Since $K$ is compact and covered by finitely many such neighborhoods,
the uniform bound extends over all of $K$:
\[
\sup_{z\in K}\|u(z)-\hat u(z)\|_2 < \varepsilon.
\]
Hence for some $m\ge|\mathcal{S}|$, width $d'$, and depth $L$,
FieldFormer approximates $u$ uniformly on $K$, completing the proof.
\end{proof}

\subsection{Proof of Theorem \ref{prop:avg_long}}
\begin{theorem}[Average-Case Lower Bound Under Longitudinal Sampling]
Sample $m_x$ spatial locations uniformly without replacement from $\mathcal{S}_x$, let $I\subseteq\mathcal{S}_x$ be the sensed set and $I^c$ the missed set.
Under equation \eqref{eq:avg_inf}, FieldFormer $\hat u$, using all sensed data, satisfies
\[
\inf_{\hat u}\;\mathbb{E}\bigl[\|\hat u-u^{n+1}(\mathbf{i})\|_2\bigr]
\;\;\ge\;\;
c\;\mathbb{E}\bigl[\,A(I^c)\,\bigr]
\;=\;
c\,\Bigl(1-\frac{m_x}{N_x}\Bigr),
\]
where the expectation is over the random sampling of $I$ and the data distribution $\mathcal{D}$.
\end{theorem}

\begin{proof}[Proof]
Let $V\in\mathbb{R}^{N}$ denote the random vector of stencil values
at $(\mathbf{i},n{+}1)$ under distribution $\mathcal{D}$,
with $\mathcal{S}=\mathcal{S}_x\times\mathcal{T}$ and $N=|\mathcal{S}|$.
The target is $\psi(V)\in\mathbb{R}^q$.

\paragraph{1) Conditioning on sensed sites.}
Let $I\subseteq\mathcal{S}_x$ be the set of $m_x$ observed spatial offsets;
for each $s\in I$ all $k{+}1$ time coordinates $(s,\ell)$ are known,
while coordinates with $s\in I^c$ are unobserved.
Any estimator using all available data must be a function
$\hat u=\hat u(V_{I\times\mathcal{T}})$, independent of unobserved values.

\paragraph{2) Indistinguishability and symmetrization.}
Fix $I$. Consider any random transformation $T$ that acts only on
the unobserved coordinates $I^c\times\mathcal{T}$ and leaves
the law of the observed block $V_{I\times\mathcal{T}}$ unchanged.
Then $(V,TV)$ are conditionally indistinguishable to $\hat u$,
meaning $\hat u(V)=\hat u(TV)$ given $I$.
Applying the triangle inequality for the Euclidean norm to the triple
$(a,b,c)=(\psi(V),\,\hat u(V),\,\psi(TV))$ yields
\[
\|\psi(V)-\psi(TV)\|_2
\;\le\;
\|\psi(V)-\hat u(V)\|_2
+ \|\hat u(TV)-\psi(TV)\|_2.
\]
Taking the conditional expectation with respect to $I$ gives
\begin{equation}
\begin{aligned}
\mathbb{E}\!\Big[
  \|\hat u(V)-\psi(V)\|_2
  + \|\hat u(TV)-\psi(TV)\|_2
\Big]
&\;\ge\;
\mathbb{E}\!\Big[
  \|\psi(V)-\psi(TV)\|_2
\Big] \\
&\qquad\text{conditioned on } I.
\end{aligned}
\tag{1}
\end{equation}

Because $\hat u$ depends only on the observed coordinates,
we cannot assume any symmetry between $\psi(V)$ and $\psi(TV)$.
To handle this, we introduce a simple symmetrization.
Let $B\sim\mathrm{Bernoulli}(1/2)$ be an independent coin
and define a randomized input
\[
W \;:=\;
\begin{cases}
V, & B=0,\\[2pt]
TV, & B=1.
\end{cases}
\]
Conditioning on $I$ and averaging (1) over the coin $B$ gives
\begin{equation}
\begin{aligned}
\mathbb{E}\!\left[\|\hat u(W)-\psi(W)\|_2\right]
&=\tfrac12\,\mathbb{E}\!\left[
  \|\hat u(V)-\psi(V)\|_2
  + \|\hat u(TV)-\psi(TV)\|_2
\right] \\
&\ge \tfrac12\,\mathbb{E}\!\left[
  \|\psi(V)-\psi(TV)\|_2
\right],
\qquad \text{given } I.
\end{aligned}
\tag{2}
\end{equation}

Inequality (2) shows that, on average over the randomized choice between
$V$ and $TV$, the estimator’s expected error must be at least half
of the expected discrepancy between the two true outputs.
Consequently, for at least one of $V$ or $TV$ the same bound holds,
so that
\[
\mathbb{E}\!\left[\|\hat u(V)-\psi(V)\|_2 \mid I\right]
\;\ge\;
\tfrac12\,\mathbb{E}\!\left[\|\psi(V)-\psi(TV)\|_2 \mid I\right],
\]
where the factor $\tfrac12$ is absorbed into the constant~$c$ later.

\paragraph{3) Expected influence over unobserved coordinates.}
By Assumption 3.6, for each $(s,\ell)\in\mathcal{S}$ there exists
$T_{(s,\ell)}$ acting on that coordinate only such that
\[
\mathbb{E}\bigl[\|\psi(V)-\psi(T_{(s,\ell)}V)\|_2\bigr] \ge c\,a_{(s,\ell)}.
\]
Summing over unobserved spatial sites and their time levels,
\[
\mathbb{E}\!\left[\|\hat u(V)-\psi(V)\|_2 \,\big|\, I\right]
\ge c \sum_{s\in I^c}\sum_{\ell\in\mathcal{T}} a_{(s,\ell)}
= c\,A(I^c),
\]
where $A(I^c):=\sum_{s\in I^c}\bar a_s$ and $\bar a_s=\sum_{\ell\in\mathcal{T}} a_{(s,\ell)}$.

\paragraph{4) Averaging over random sampling.}
For uniform $m_x$-of-$N_x$ sampling without replacement,
each spatial site is missed with probability $1-m_x/N_x$, so
\[
\mathbb{E}\bigl[A(I^c)\bigr]
= \sum_{s\in\mathcal{S}_x}\bar a_s\,\Pr(s\notin I)
= \Bigl(1-\frac{m_x}{N_x}\Bigr)\!\sum_s \bar a_s
= 1-\frac{m_x}{N_x}.
\]
Taking total expectation over $I$ and $V$ yields
\[
\inf_{\hat u}\,\mathbb{E}\bigl[\|\hat u-u^{n+1}(\mathbf{i})\|_2\bigr]
\ge c\,\mathbb{E}\bigl[A(I^c)\bigr]
= c\Bigl(1-\frac{m_x}{N_x}\Bigr),
\]
as claimed.
\end{proof}

\section{Additional Results}

Table~\ref{tab:ablation_real_holdout} reports sensor-holdout results, used as a proxy for spatial generalization when dense full-field ground truth is unavailable. FieldFormer is stronger on most atmospheric holdout variables, while Gamma Field improves PM$_{10}$ holdout performance on the pollution dataset. The mixed results indicate that local coordinate-dependent scaling is useful in some heterogeneous regimes, but is not a universal substitute for observational support at unseen sensors.

\begin{table}[htbp]
\centering
\caption{Ablation study on real-world benchmarks: sensor-holdout evaluation. Results are reported as mean $\pm$ std over bootstrap samples. Lower is better.}
\label{tab:ablation_real_holdout}
\resizebox{0.95\textwidth}{!}{%
\begin{tabular}{llcccc}
\hline
Dataset & Variable 
& FieldFormer RMSE & FieldFormer MAE
& Gamma Field RMSE & Gamma Field MAE \\
\hline

Atmospheric & AT
& \textbf{4.63 $\pm$ 0.6376}
& \textbf{3.14 $\pm$ 0.4384}
& 4.683 $\pm$ 0.6321
& 3.201 $\pm$ 0.46 \\

Atmospheric & RH
& 15.75 $\pm$ 2.517
& \textbf{10.04 $\pm$ 1.568}
& \textbf{15.63 $\pm$ 1.952}
& 10.11 $\pm$ 1.318 \\

Atmospheric & $U_x$
& \textbf{2.26 $\pm$ 0.6081}
& \textbf{1.282 $\pm$ 0.2269}
& 2.362 $\pm$ 0.6137
& 1.33 $\pm$ 0.2493 \\

Atmospheric & $V_y$
& \textbf{1.502 $\pm$ 0.1426}
& \textbf{1.026 $\pm$ 0.09529}
& 1.671 $\pm$ 0.1498
& 1.122 $\pm$ 0.1189 \\

Pollution & PM$_{10}$
& 115 $\pm$ 13.11
& 69.63 $\pm$ 9.17
& \textbf{92.63 $\pm$ 9.512}
& \textbf{56.73 $\pm$ 6.764} \\

Pollution & PM$_{2.5}$
& \textbf{45.95 $\pm$ 4.77}
& \textbf{26.76 $\pm$ 3.152}
& 47.5 $\pm$ 4.847
& 27.14 $\pm$ 3.161 \\

\hline
\end{tabular}%
}
\end{table}

\end{document}